\newcommand{\tree}{\mathcal{T}}
\newcommand{\node}{Q}
\newcommand{\score}{\textnormal{\texttt{score}}}
\newcommand{\merge}{{\xi}}
\newcommand{\gp}{\textsc{Partition}}
\title{Learning to Optimize Computational Resources:\\
Frugal Training with Generalization Guarantees}
\author{
Maria-Florina Balcan \\ \small Carnegie Mellon University \\ \small \texttt{ninamf@cs.cmu.edu}
\and 
Tuomas Sandholm \\ \small Carnegie Mellon University \\
\small Optimized Markets, Inc.\\
\small Strategic Machine, Inc.\\
\small Strategy Robot, Inc.\\
 \small \texttt{sandholm@cs.cmu.edu}
\and 
Ellen Vitercik \\ \small Carnegie Mellon University \\ \small \texttt{vitercik@cs.cmu.edu}}
\begin{document}

\maketitle

\begin{abstract}
	Algorithms typically come with tunable parameters that have a considerable impact on the computational resources they consume. Too often, practitioners must hand-tune the parameters, a tedious and error-prone task. A recent line of research provides algorithms that return nearly-optimal parameters from within a finite set. These algorithms can be used when the parameter space is infinite by providing as input a random sample of parameters. This data-independent discretization, however, might miss pockets of nearly-optimal parameters: prior research has presented scenarios where the only viable parameters lie within an arbitrarily small region. We provide an algorithm that learns a finite set of promising parameters from within an infinite set. Our algorithm can help compile a configuration portfolio, or it can be used to select the input to a configuration algorithm for finite parameter spaces. Our approach applies to any configuration problem that satisfies a simple yet ubiquitous structure: the algorithm's performance is a piecewise constant function of its parameters. Prior research has exhibited this structure in domains from integer programming to clustering.
\end{abstract}

\section{Introduction}
Similar combinatorial problems often arise in seemingly unrelated disciplines. Integer programs, for example, model problems in fields ranging from computational biology to economics. To facilitate customization, algorithms often come with tunable parameters that significantly impact the computational resources they consume, such as runtime. Hand-tuning parameters can be time consuming and may lead to sub-optimal results. In this work, we develop the foundations of automated algorithm configuration via machine learning. A key challenge we face is that in order to evaluate a configuration's requisite computational resources, the learning algorithm itself must expend those resources.

To frame algorithm configuration as a machine learning problem, we assume sample access to an unknown distribution over problem instances, such as the integer programs an airline solves day to day. The learning algorithm uses samples to determine parameters that, ideally, will have strong expected performance.
Researchers have studied
this configuration model for decades, leading to
advances in artificial intelligence~\citep{Xu08:SATzilla}, computational
biology~\citep{DeBlasio18:Parameter}, and many other fields. This approach has also been used in industry for tens of billions of dollars of combinatorial
auctions~\citep{Sandholm13:Very-Large-Scale}.

Recently, two lines of research have emerged that explore the theoretical underpinnings of algorithm configuration. One provides sample complexity guarantees, bounding the number of samples sufficient to ensure that an algorithm's performance on average over the samples generalizes to its expected performance on the distribution~\citep{Gupta17:PAC,Balcan17:Learning,Balcan18:Learning}. These sample complexity bounds apply no matter how the learning algorithm operates, and these papers do not include learning algorithms that extend beyond exhaustive search.

The second line of research provides algorithms for finding nearly-optimal configurations from a finite set~\citep{Kleinberg17:Efficiency,Kleinberg19:Procrastinating,Weisz18:LEAPSANDBOUNDS,Weisz19:CapsAndRuns}. These algorithms can also be used when the parameter space is infinite: for any $\gamma \in (0,1)$, first sample $\tilde{\Omega}(1 / \gamma)$ configurations, and then run the algorithm over this finite set. The authors guarantee that the output configuration will be within the top $\gamma$-quantile. If there is only a small region of high-performing parameters, however, the uniform sample might completely miss all good parameters. Algorithm configuration problems with only tiny pockets of high-performing parameters do indeed exist: \citet{Balcan18:Learning} present distributions over integer programs where the optimal parameters lie within an arbitrarily small region of the parameter space. For any parameter within that region, branch-and-bound---the most widely-used integer programming algorithm---terminates instantaneously. Using any other parameter, branch-and-bound takes an exponential number of steps. This region of optimal parameters can be made so small that any random sampling technique would require an arbitrarily large sample of parameters to hit that region. We discuss this example in more detail in Section~\ref{sec:comparison}.

This paper marries these two lines of research. We present an algorithm that identifies a
finite set of promising parameters within an infinite set, given sample access to a distribution over problem instances. We prove that this set contains a nearly
optimal parameter with high probability. The set can serve as the input to a configuration algorithm
for finite parameter spaces~\citep{Kleinberg17:Efficiency,Kleinberg19:Procrastinating,Weisz18:LEAPSANDBOUNDS,Weisz19:CapsAndRuns}, which we prove will then return a nearly optimal parameter from the
infinite set.

An obstacle in our approach is that the loss function measuring an algorithm's performance as a function of its parameters often exhibits jump discontinuities: a nudge to the parameters can trigger substantial changes in the algorithm's behavior. In order to provide guarantees, we must tease out useful structure in the configuration problems we study.

The structure we identify is simple yet ubiquitous in combinatorial domains:
our approach applies to any configuration problem where the algorithm's performance as a function of its parameters is piecewise constant.
Prior research has demonstrated that algorithm configuration problems from diverse domains exhibit this structure. For example, \citet{Balcan18:Learning} uncovered this structure for branch-and-bound algorithm configuration. Many corporations must regularly solve reams of integer programs, and therefore require highly customized solvers. For example, integer programs are a part of many mesh processing pipelines in computer graphics~\citep{Bommes09:Mixed}. Animation studios with thousands of meshes require carefully tuned solvers which, thus far, domain experts have handcrafted~\citep{Bommes10:Practical}. Our algorithm can be used to find configurations that minimize the branch-and-bound tree size.
\citet{Balcan17:Learning} also exhibit this piecewise-constant structure in the context of linkage-based hierarchical clustering algorithms. The algorithm families they study interpolate between the classic single-, complete-, and average-linkage procedures. Building the cluster hierarchy is expensive: the best-known algorithm's runtime is $\tilde O(n^2)$ given $n$ datapoints~
\citep{Manning10:Introduction}. As with branch-and-bound, our algorithm finds configurations that return satisfactory clusterings while minimizing the hierarchy tree size.

We now describe our algorithm at a high level. Let $\ell$ be a loss function where $\ell(\vec{\rho}, j)$ measures the computational resources (running time, for example) required to solve problem instance $j$ using the algorithm parameterized by the vector $\vec{\rho}$. Let $OPT$ be the smallest expected loss\footnote{As we describe in Section 2, we compete with a slightly more nuanced benchmark than $OPT$, in line with prior research.} $\E_{j \sim \Gamma}[\ell(\vec{\rho}, j)]$ of any parameter $\vec{\rho}$, where $\Gamma$ is an unknown distribution over problem instances. Our algorithm maintains upper confidence bound on $OPT$, initially set to $\infty$. On each round $t$, the algorithm begins by drawing a set $\sample_t$ of sample problem instances. It computes the partition of the parameter space into regions where for each problem instance in $\sample_t$, the loss $\ell$, capped at $2^t$, is a constant function of the parameters. On a given region of this partition, if the average capped loss is sufficiently low, the algorithm chooses an arbitrary parameter from that region and deems it ``good.'' Once the cap $2^t$ has grown sufficiently large compared to the upper confidence bound on $OPT$, the algorithm returns the set of good parameters. We summarize our guarantees informally below.

\begin{theorem}[Informal]
	The following guarantees hold:
	\begin{enumerate}
		\item The set of output parameters contains a nearly-optimal parameter with high probability.
		\item Given accuracy parameters $\epsilon$ and $\delta$, the algorithm terminates after $O\left(\ln\left(\sqrt[4]{1+\epsilon} \cdot OPT/\delta\right)\right)$ rounds.
		\item On the algorithm's final round, let $P$ be the size of the partition the algorithm computes. The number of parameters it outputs is $O\left(P \cdot \ln\left(\sqrt[4]{1+\epsilon} \cdot OPT/\delta\right)\right)$.
		\item The algorithm's sample complexity on each round $t$ is polynomial in $2^t$ (which scales linearly with $OPT$), $\log P$, the parameter space dimension, $\frac{1}{\delta}$, and $\frac{1}{\epsilon}$.
	\end{enumerate}
\end{theorem}	

We prove that our sample complexity can be exponentially better than the best-known uniform convergence bound. Moreover, it can find strong configurations in scenarios where uniformly sampling configurations will fail.

\section{Problem definition}\label{sec:def}
The algorithm configuration model we adopt is a generalization of the model from prior research~\citep{Kleinberg17:Efficiency,Kleinberg19:Procrastinating,Weisz18:LEAPSANDBOUNDS,Weisz19:CapsAndRuns}. There is a set $\Pi$ of problem instances and an unknown distribution $\Gamma$ over $\Pi$. For example, this distribution might represent the integer programs an airline solves day to day. Each algorithm is parameterized by a vector $\vec{\rho} \in \cP \subseteq \R^d$. At a high level, we assume we can set a budget on the computational resources the algorithm consumes, which we quantify using an integer $\tau \in \Z_{\geq 0}$. For example, $\tau$ might measure the maximum running time we allow the algorithm. There is a utility function $u: \cP \times \Pi \times \Z_{\geq 0} \to \{0,1\}$, where $u(\vec{\rho}, j, \tau) = 1$ if and only if the algorithm parameterized by $\vec{\rho}$ returns a solution to the instance $j$ given a budget of $\tau$. We make the natural assumption that the algorithm is more likely to find a solution the higher its budget: $u(\vec{\rho}, j, \tau) \geq u(\vec{\rho}, j, \tau')$ for $\tau \geq \tau'$. Finally, there is a loss function $\ell : \cP \times \Pi \to \Z_{\geq 0}$ which measures the minimum budget the algorithm requires to find a solution. Specifically, $\ell(\vec{\rho}, j) = \infty$ if $u(\vec{\rho}, j, \tau) = 0$ for all $\tau$, and otherwise, $\ell(\vec{\rho}, j) = \argmin\left\{ \tau: u(\vec{\rho}, j, \tau) = 1\right\}$. In Section~\ref{sec:examples}, we provide several examples of this problem definition instantiated for combinatorial problems.

The distribution $\Gamma$ over problem instances is unknown, so we use samples from $\Gamma$ to find a parameter vector $\hat{\vec{\rho}} \in \cP$ with small expected loss. Ideally, we could guarantee that \begin{equation}\E_{j \sim \Gamma}\left[\ell\left(\hat{\vec{\rho}}, j\right)\right] \leq (1+\epsilon)\inf_{\vec{\rho} \in \cP}\left\{\E_{j \sim \Gamma}\left[\ell\left(\vec{\rho}, j\right)\right]\right\}.\label{eq:ideal}\end{equation} Unfortunately, this ideal goal is impossible to achieve with a finite number of samples, even in the extremely simple case where there are only two configurations, as illustrated below.

\begin{example}\label{ex:Weisz}[\citet{Weisz19:CapsAndRuns}]
	Let $\cP = \{1, 2\}$ be a set of two configurations.
	Suppose that the loss of the first configuration is 2 for all problem instances: $\ell(1, j) = 2$ for all $j \in \Pi$. Meanwhile, suppose that $\ell(2, j) = \infty$ with probability $\delta$ for some $\delta \in (0,1)$ and $\ell(2, j) = 1$ with probability $1-\delta$. In this case, $\E_{j \sim \Gamma}[\ell(1,j)] = 2$ and $\E_{j \sim \Gamma}[\ell(2, j)] = \infty$.
	In order for any algorithm to verify that the first configuration's expected loss is substantially better than the second's, it must sample at least one problem instance $j$ such that $\ell(2,j) = \infty$. Therefore, it must sample $\Omega(1/\delta)$ problem instances, a lower bound that approaches infinity as $\delta$ shrinks. As a result, it is impossible to give a finite bound on the number of samples sufficient to find a parameter $\hat{\vec{\rho}}$ that satisfies Equation~\eqref{eq:ideal}.
\end{example}

The obstacle that this example exposes is that some configurations might have an enormous loss on a few rare problem instances.
To deal with this impossibility result, 
\citet{Weisz18:LEAPSANDBOUNDS,Weisz19:CapsAndRuns}, building off of work by \citet{Kleinberg17:Efficiency}, propose a relaxed notion of approximate optimality. To describe this relaxation, we introduce the following notation. Given $\delta \in (0,1)$ and a parameter vector $\vec{\rho} \in \cP$, let $t_{\delta}(\vec{\rho})$ be the largest cutoff $\tau \in \Z_{\geq 0}$ such that the probability $\ell(\vec{\rho}, j)$ is greater than $\tau$ is at least $\delta$. Mathematically, $t_{\delta}(\vec{\rho}) = \argmax_{\tau \in \Z}\left\{\Pr_{j \sim \Gamma}[\ell(\vec{\rho}, j) \geq \tau] \geq \delta\right\}$. The value $t_{\delta}(\vec{\rho})$ can be thought of as the beginning of the loss function's ``$\delta$-tail.''
\begin{figure*}
	\centering
	\includegraphics[width=\textwidth]{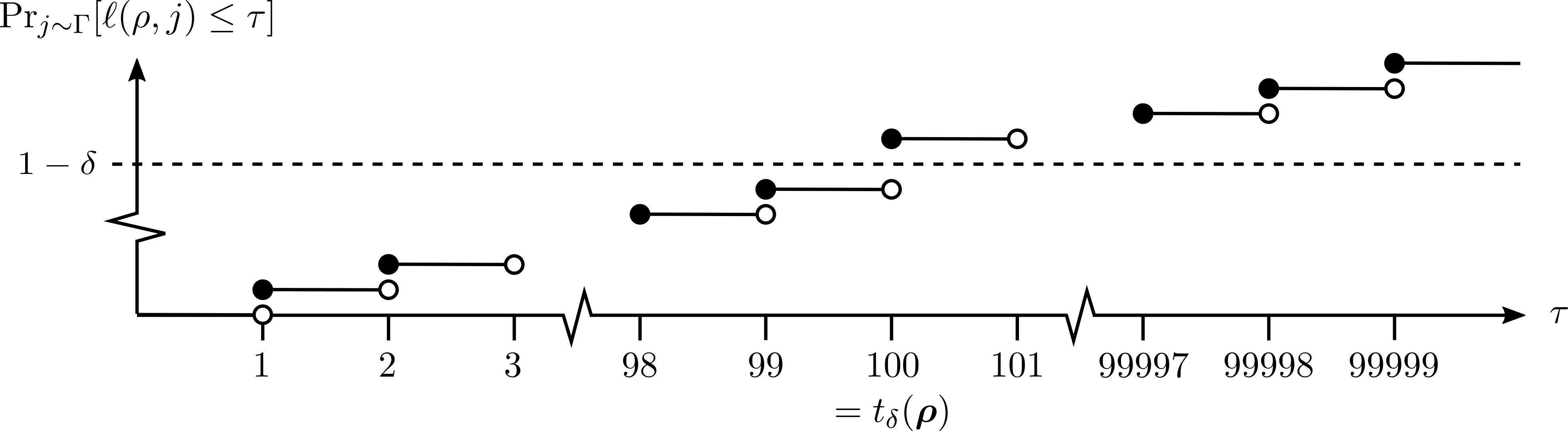}
	\caption{Fix a parameter vector $\vec{\rho}$. The figure is a hypothetical illustration of the cumulative density function of $\ell(\vec{\rho}, j)$ when $j$ is sampled from $\Gamma$. For each value $\tau$ along the $x$-axis, the solid line equals $\Pr_{j \sim \Gamma}\left[\ell(\vec{\rho}, j) \leq \tau\right]$. The dotted line equals the constant function $1-\delta$. Since 100 is the largest integer such that $\Pr_{j \sim \Gamma}\left[\ell(\vec{\rho}, j) \geq 100\right] \geq \delta$, we have that $t_{\delta}(\vec{\rho}) = 100.$}\label{fig:delta}
\end{figure*}
We illustrate the definition of $t_{\delta}(\vec{\rho})$ in Figure~\ref{fig:delta}.
We now define the relaxed notion of approximate optimality by \citet{Weisz18:LEAPSANDBOUNDS}.
\begin{definition}[$\left(\epsilon, \delta, \cP\right)$-optimality]\label{def:opt}
	A parameter vector $\hat{\vec{\rho}}$ is \emph{$\left(\epsilon, \delta, \cP\right)$-optimal} if \[\E_{j \sim \Gamma}\left[\min\left\{\ell\left(\hat{\vec{\rho}}, j\right), t_{\delta}\left(\hat{\vec{\rho}}\right)\right\}\right] \leq (1+\epsilon)\inf_{\vec{\rho} \in \cP} \left\{\E_{j \sim \Gamma}\left[\min\left\{\ell\left(\vec{\rho}, j\right), t_{\delta/2}(\vec{\rho})\right\}\right]\right\}.\]
\end{definition}
In other words, a parameter vector $\hat{\vec{\rho}}$ is \emph{$\left(\epsilon, \delta, \cP\right)$-optimal} if its $\delta$-capped expected loss is within a $(1 + \epsilon)$-factor of the optimal $\delta/2$-capped expected loss.\footnote{The fraction $\delta/2$ can be replaced with any $c\delta$ for $c \in (0,1)$. Ideally, we would replace $\delta/2$ with $\delta$, but the resulting property would be impossible to verify with high probability~\citep{Weisz18:LEAPSANDBOUNDS}.} To condense notation, we write $OPT_{c\delta} := \inf_{\vec{\rho} \in \cP} \left\{\E_{j \sim \Gamma}\left[\min\left\{\ell\left(\vec{\rho}, j\right), t_{c\delta}(\vec{\rho})\right\}\right]\right\}$. If an algorithm returns an $\left(\epsilon, \delta, \bar{\cP}\right)$-optimal parameter from within a finite set $\bar{\cP}$, we call it a \emph{configuration algorithm for finite parameter spaces}.
\citet{Weisz19:CapsAndRuns} provide one such algorithm, {\sc CapsAndRuns}.

\subsection{Example applications}\label{sec:examples}

In this section, we provide several instantiations of our problem definition in combinatorial domains.

\paragraph{Tree search.}
Tree search algorithms, such as branch-and-bound, are the most widely-used tools for solving combinatorial problems, such as (mixed) integer programs and constraint satisfaction problems. These algorithms recursively partition the search space to find an optimal solution, organizing this partition as a tree. Commercial solvers such as CPLEX, which use tree search under the hood, come with hundreds of tunable parameters.
Researchers have developed machine learning algorithms for tuning these parameters~\citep{Hutter09:Paramils,Dickerson13:Throwing,He14:Learning,Balafrej15:Multi,Khalil16:Learning,Khalil17:Learning,Kruber17:Learning,Xia18:Learning,Balcan18:Learning}.
Given parameters $\vec{\rho}$ and a problem instance $j$, we might define the budget $\tau$ to cap the size of the tree the algorithm builds. In that case, the utility function is defined such that $u(\vec{\rho}, j, \tau) = 1$ if and only if the algorithm terminates, having found the optimal solution, after building a tree of size $\tau$. The loss $\ell(\vec{\rho}, j)$ equals the size of the entire tree built by the algorithm parameterized by $\vec{\rho}$ given the instance $j$ as input.

\paragraph{Clustering.}
Given a set of datapoints and the distances between each point, the goal in clustering is to partition the points into subsets so that points within any set are ``similar.'' 
Clustering algorithms are used to group proteins by function, classify images by subject, and myriad other applications.
Typically, the quality of a clustering is measured by an objective function, such as the classic $k$-means, $k$-median, or $k$-center objectives. Unfortunately, it is NP-hard to determine the clustering that minimizes any of these objectives. As a result, researchers have developed a wealth of approximation and heuristic clustering algorithms. However, no one algorithm is optimal across all applications.

\citet{Balcan17:Learning} provide sample complexity guarantees for clustering algorithm configuration. Each problem instance is a set of datapoints and there is a distribution over clustering problem instances. They analyze several infinite classes of clustering algorithms. Each of these algorithms begins with a linkage-based step and concludes with a dynamic programming step.
The linkage-based routine constructs a hierarchical tree of clusters.
At the beginning of the process, each datapoint is in a cluster of its own. The algorithm sequentially merges the clusters into larger clusters until all elements are in the same cluster. There are many ways to build this tree: merge the clusters that are closest in terms of their two closest points (\emph{single-linkage}), their two farthest points (\emph{complete-linkage}), or on average over all pairs of points (\emph{average-linkage}). These linkage procedures are commonly used in practice~\citep{Awasthi17:Local,Saeed03:Software,White10:Alignment}
and come with theoretical guarantees.
\citet{Balcan17:Learning} study an infinite parameterization, \emph{$\rho$-linkage}, that interpolates between single-, average-, and complete-linkage.
After building the cluster tree, the dynamic programming step returns the pruning of this tree that minimizes a fixed objective function, such as the $k$-means, $k$-median, or $k$-center objectives.

Building the full hierarchy is expensive because the best-known algorithm's runtime is $O(n^2\log n)$, where $n$ is the number of datapoints~
\citep{Manning10:Introduction}. It is not always necessary, however, 
to build the entire tree: the algorithm can preemptively terminate the linkage step 
after $\tau$ merges, then use dynamic programming to recover the best pruning of 
the cluster forest. We refer to this variation as \emph{$\tau$-capped $\rho$-linkage}. To evaluate the resulting clustering, we assume there is a cost function $c: \cP \times \Pi \times \Z \to \R$ where $c(\rho, j, \tau)$ measures the
quality of the clustering $\tau$-capped $\rho$-linkage returns, given the instance $j$ as input. We assume there is a 
threshold $\theta_j$ where the clustering is admissible if and 
only if $c(\rho, j, \tau) \leq \theta_j$, which means the utility function is defined as $u(\rho, j, \tau) = \textbf{1}_{\left\{c(\rho, j, \tau) \leq \theta_j\right\}}$. For example, $c(\rho, j, \tau)$ might measure the clustering's $k$-means objective value, and $\theta_j$ might equal the optimal $k$-means objective value (obtained only for the training instances via an expensive computation) plus an error term.

\section{Data-dependent discretizations of infinite parameter spaces}
We begin this section by proving an intuitive fact: given a finite subset $\bar{\cP} \subset \cP$ of parameters that contains at least one ``sufficiently good'' parameter, a configuration algorithm for finite parameter spaces, such as {\sc CapsAndRuns}~\citep{Weisz19:CapsAndRuns}, returns a parameter that's nearly optimal over the infinite set $\cP$. Therefore, our goal is to provide an algorithm that takes as input an infinite parameter space and returns a finite subset that contains at least one good parameter. A bit more formally, a parameter is ``sufficiently good'' if its $\delta/2$-capped expected loss is within a $\sqrt{1+\epsilon}$-factor of $OPT_{\delta/4}$.
We say a finite parameter set $\bar{\cP}$ is an \emph{$\left(\epsilon, \delta\right)$-optimal subset} if it contains a good parameter.

\begin{definition}[$\left(\epsilon, \delta\right)$-optimal subset]\label{def:opt_subset}
	A finite set $\bar{\cP} \subset \cP$ is an \emph{$\left(\epsilon, \delta\right)$-optimal subset} if there is a vector $\hat{\vec{\rho}} \in \bar{\cP}$ such that $\E_{j \sim \Gamma}\left[\min\left\{\ell\left(\hat{\vec{\rho}}, j\right), t_{\delta/2}\left(\hat{\vec{\rho}}\right)\right\}\right] \leq \sqrt{1+\epsilon}\cdot OPT_{\delta/4}.$
\end{definition}

We now prove that given an $(\epsilon, \delta)$-optimal subset $\bar{\cP} \subset \cP$, a configuration algorithm for finite parameter spaces returns a nearly optimal parameter from the infinite space $\cP$.

\begin{theorem}\label{thm:reduction}
	Let $\bar{\cP} \subset \cP$ be an $(\epsilon, \delta)$-optimal subset and let $\epsilon' = \sqrt{1 + \epsilon} - 1$. Suppose $\hat{\vec{\rho}} \in \bar{\cP}$ is $\left(\epsilon', \delta, \bar{\cP}\right)$-optimal. Then $\E_{j \sim \Gamma}\left[\min\left\{\ell\left(\hat{\vec{\rho}}, j\right), t_{\delta}\left(\hat{\vec{\rho}}\right)\right\}\right] \leq (1 + \epsilon) \cdot OPT_{\delta/4}$.
\end{theorem}

\begin{proof}
	Since the parameter $\hat{\vec{\rho}}$ is $\left(\epsilon', \delta, \bar{\cP}\right)$-optimal, we know that $\E_{j \sim \Gamma}\left[\min\left\{\ell\left(\hat{\vec{\rho}}, j\right), t_{\delta}\left(\hat{\vec{\rho}}\right)\right\}\right] \leq \sqrt{1 + \epsilon} \cdot \min_{\vec{\rho} \in \bar{\cP}} \left\{\E_{j \sim \Gamma}\left[\min\left\{\ell\left(\vec{\rho}, j\right), t_{\delta/2}(\vec{\rho})\right\}\right]\right\}$. (We use a minimum instead of an infimum because $\bar{\cP}$ is a finite set by Definition~\ref{def:opt_subset}.)
	The set $\bar{\cP}$ is an $(\epsilon, \delta)$-optimal subset of the parameter space $\cP$, so there exists a parameter vector $\vec{\rho}' \in \bar{\cP}$ such that $\E_{j \sim \Gamma}\left[\min\left\{\ell\left(\vec{\rho}', j\right), t_{\delta/2}\left(\vec{\rho}'\right)\right\}\right] \leq \sqrt{1+\epsilon}\cdot OPT_{\delta/4}$. Therefore, \begin{align*}\E_{j \sim \Gamma}\left[\min\left\{\ell\left(\hat{\vec{\rho}}, j\right), t_{\delta}\left(\hat{\vec{\rho}}\right)\right\}\right] &\leq \sqrt{1 + \epsilon} \cdot \min_{\vec{\rho} \in \bar{\cP}} \left\{\E_{j \sim \Gamma}\left[\min\left\{\ell\left(\vec{\rho}, j\right), t_{\delta/2}(\vec{\rho})\right\}\right]\right\}\\
		&\leq \sqrt{1 + \epsilon} \cdot \E_{j \sim \Gamma}\left[\min\left\{\ell\left(\vec{\rho}', j\right), t_{\delta/2}\left(\vec{\rho}'\right)\right\}\right]\\
		&\leq (1 + \epsilon) \cdot OPT_{\delta/4},\end{align*} so the theorem statement holds.
\end{proof}

\section{Our main result: Algorithm for learning $(\epsilon, \delta)$-optimal subsets}\label{sec:algorithm}
We present an algorithm for learning $(\epsilon,\delta)$-optimal subsets for configuration problems that satisfy a simple, yet ubiquitous structure: for any problem instance $j$, the loss function $\ell(\cdot, j)$ is piecewise constant. This structure has been observed throughout a diverse array of configuration problems ranging from clustering to integer programming~\citep{Balcan17:Learning,Balcan18:Learning}. More formally, this structure holds if for any problem instance $j \in \Pi$ and cap $\tau \in \Z_{\geq 0}$, there is a finite partition of the parameter space $\cP$ such that in any one region $R$ of this partition, for all pairs of parameter vectors $\vec{\rho}, \vec{\rho}' \in R$, $\min\left\{\ell(\vec{\rho}, j), \tau\right\} = \min\left\{\ell(\vec{\rho}', j), \tau\right\}$.

To exploit this piecewise-constant structure, we require access to a function $\gp$ that takes as input a set $\sample$ of problem instances and an integer $\tau$ and returns this partition of the parameters. Namely, it returns a set of tuples $\left(\cP_1, z_1, \vec{\tau}_1\right), \dots, \left(\cP_k, z_k, \vec{\tau}_k\right) \in 2^{\cP} \times [0,1] \times \Z^{|\sample|}$ such that:
\begin{enumerate}
	\item The sets $\cP_1, \dots, \cP_k$ make up a partition of $\cP$.
	\item For all subsets $\cP_i$ and vectors $\vec{\rho}, \vec{\vec{\rho}'} \in \cP_i$, $\frac{1}{|\sample|} \sum_{j \in \sample} \textbf{1}_{\left\{\ell(\vec{\rho}, j) \leq \tau\right\}} = \frac{1}{|\sample|} \sum_{j \in \sample} \textbf{1}_{\left\{\ell(\vec{\rho}', j) \leq \tau\right\}} = z_i.$
	\item For all subsets $\cP_i$, all $\vec{\rho}, \vec{\vec{\rho}'} \in \cP_i$, and all $j \in \sample$, $\min\left\{\ell(\vec{\rho}, j), \tau\right\} = \min\left\{\ell(\vec{\rho}', j), \tau\right\} = \tau_i[j].$
\end{enumerate}
We assume the number of tuples $\gp$ returns is monotone: if $\tau \leq \tau'$, then $\left|\gp(\sample, \tau)\right| \leq \left|\gp(\sample, \tau')\right|$ and if $\sample \subseteq \sample'$, then $\left|\gp(\sample, \tau)\right| \leq \left|\gp(\sample', \tau)\right|$.

As we describe in Appendix~\ref{app:related}, results from prior research imply guidance for implementing $\gp$ in the contexts of clustering and integer programming. For example, in the clustering application we describe in Section~\ref{sec:examples}, the distribution $\Gamma$ is over clustering instances. Suppose $n$ is an upper bound on the number of points in each instance. \citet{Balcan17:Learning} prove that for any set $\sample$ of samples and any cap $\tau$, in the worst case, $\left|\gp(\sample, \tau)\right| = O\left(|\sample|n^8\right)$, though empirically, $\left|\gp(\sample, \tau)\right|$ is often several orders of magnitude smaller~\citep{Balcan19:Learning}. \citet{Balcan17:Learning} and \citet{Balcan19:Learning} provide guidance for implementing $\gp$.

\begin{algorithm}[t]
	\caption{Algorithm for learning $(\epsilon,\delta)$-optimal subsets}\label{alg:findSubset}
	\begin{algorithmic}[1]
		\Require Parameters $\delta, \zeta \in (0,1), \epsilon > 0$.
		\State Set $\eta \leftarrow \min\left\{\frac{1}{8}\left(\sqrt[4]{1 + \epsilon} - 1\right), \frac{1}{9}\right\}$, $t \leftarrow 1$, $T \leftarrow \infty$, and $\cG \leftarrow \emptyset$.
		\While{$2^{t-3}\delta < T$}\label{step:outer_while}
		\State Set $\sample_t \leftarrow \{j\}$, where $j \sim \Gamma$.
		\While{$\eta\delta < \sqrt{\frac{2d \ln \left|\gp\left(\sample_t, 2^t\right)\right|}{\left|\sample_t\right|}} + \sqrt{\frac{8}{\left|\sample_t\right|}\ln \frac{8\left(2^t \left|\sample_t\right| t\right)^2}{\zeta}}$}
		Draw $j \sim \Gamma$ and add $j$ to $\sample_t$.\label{step:growS}
		\EndWhile
		\State Compute tuples $\left(\cP_1, z_1, \vec{\tau}_1\right), \dots, \left(\cP_k, z_k, \vec{\tau}_k\right) \leftarrow \gp\left(\sample_t, 2^t\right)$.\label{step:partition}
		\For {$i \in \{1, \dots, k\}$ with $z_i \geq 1 - 3\delta/8$}\label{step:delta}
		\State Set $\cG \leftarrow \cG \cup \left\{\cP_i\right\}$.\label{step:updateG}
		\State Sort the elements of $\vec{\tau}_i$: $\tau_1 \leq \cdots \leq \tau_{\left|\sample_t\right|}$.\label{step:sort}
		\State Set $T' \leftarrow \frac{1}{\left|\sample_t\right|}\sum_{m = 1}^{\left|\sample_t\right|}\min \left\{\tau_m, \tau_{\lfloor \left|\sample_t\right|\left(1 - 3\delta/8\right)\rfloor}\right\}.$\label{step:estimate}
		\If {$T' < T$}
		Set $T \leftarrow T'.$\label{step:updateT}
		\EndIf
		\EndFor
		\State $t \leftarrow t + 1$.
		\EndWhile
		\State For each set $\cP' \in \cG$, select a vector $\vec{\rho}_{\cP'} \in \cP'$.\label{step:arbitrary}
		\Ensure The $(\epsilon, \delta)$-optimal set $\left\{\vec{\rho}_{\cP'} \mid \cP' \in \cG\right\}$.
	\end{algorithmic}
\end{algorithm}

\paragraph{High-level description of algorithm.} We now describe our algorithm for learning $(\epsilon,\delta)$-optimal subsets. See Algorithm~\ref{alg:findSubset} for the pseudocode. The algorithm maintains a variable $T$, initially set to $\infty$, which roughly represents an upper confidence bound on $OPT_{\delta/4}$. It also maintains a set $\cG$ of parameters which the algorithm believes might be nearly optimal. The algorithm begins by aggressively capping the maximum loss $\ell$ it computes by 1. At the beginning of each round, the algorithm doubles this cap until the cap grows sufficiently large compared to the upper confidence bound $T$. At that point, the algorithm terminates. On each round $t$, the algorithm draws a set $\sample_t$ of samples (Step~\ref{step:growS}) that is just large enough to estimate the expected $2^t$-capped loss $\E_{j \sim \Gamma}\left[\min\left\{\ell(\vec{\rho}, j), 2^t\right\}\right]$ for every parameter $\vec{\rho} \in \cP$. The number of samples it draws is a data-dependent quantity that depends on \emph{empirical Rademacher complexity}~\citep{Koltchinskii01:Rademacher,Bartlett02:Rademacher}.

Next, the algorithm evaluates the function $\gp\left(\sample_t, 2^t\right)$ to obtain the tuples \[\left(\cP_1, z_1, \vec{\tau}_1\right), \dots, \left(\cP_k, z_k, \vec{\tau}_k\right) \in 2^{\cP} \times [0,1] \times \Z^{|\sample_t|}.\] By definition of this function, for all subsets $\cP_i$ and parameter vector pairs $\vec{\rho}, \vec{\rho}' \in \cP_i$,
the fraction of instances $j \in \sample_t$ with $\ell(\vec{\rho}, j) \leq 2^t$ is equal to the fraction of instances $j \in \sample_t$ with $\ell(\vec{\rho}', j) \leq 2^t$.
In other words, $\frac{1}{|\sample_t|} \sum_{j \in \sample_t} \textbf{1}_{\left\{\ell(\vec{\rho}, j) \leq 2^t\right\}} = \frac{1}{|\sample_t|} \sum_{j \in \sample_t} \textbf{1}_{\left\{\ell(\vec{\rho}', j) \leq 2^t\right\}} = z_i.$ If this fraction is sufficiently high (at least $1 - 3\delta/8$), the algorithm adds $\cP_i$ to the set of good parameters $\cG$ (Step~\ref{step:updateG}). The algorithm estimates the $\delta/4$-capped expected loss of the parameters contained $\cP_i$, and if this estimate is smaller than the current upper confidence bound $T$ on $OPT_{\delta/4}$, it updates $T$ accordingly (Steps~\ref{step:sort} through \ref{step:updateT}). Once the cap $2^t$ has grown sufficiently large compared to the upper confidence bound $T$, the algorithm returns an arbitrary parmeter from each set in $\cG$.

\paragraph{Algorithm analysis.} We now provide guarantees on Algorithm~\ref{alg:findSubset}'s performance.
We denote the values of $t$ and $T$ at termination by $\bar{t}$ and $\bar{T}$, and we denote the state of the set $\cG$ at termination by $\bar{\cG}$. For each set $\cP' \in \bar{\cG}$, we use the notation $\tau_{\cP'}$ to denote the value $\tau_{\lfloor |\sample_t|\left(1 - 3\delta/8\right)\rfloor}$ in Step~\ref{step:estimate} during the iteration $t$ that $\cP'$ is added to $\cG$.

\begin{theorem}\label{thm:main}
	With probability $1 - \zeta$, the following conditions hold, with \[\eta = \min\left\{\frac{\sqrt[4]{1 + \epsilon} - 1}{8}, \frac{1}{9}\right\} \quad \text{and} \quad c = \frac{16\sqrt[4]{1+\epsilon}}{\delta}:\] 
	\begin{enumerate}
		\item Algorithm~\ref{alg:findSubset} terminates after $\bar{t} = O\left(\log\left(c\cdot OPT_{\delta/4}\right)\right)$ iterations.
		\item Algorithm~\ref{alg:findSubset} returns an $(\epsilon,\delta)$-optimal set of parameters of size at most \[\sum_{t = 1}^{\bar{t}}\left|\gp\left(\sample_t, c \cdot OPT_{\delta/4}\right)\right|.\]
		\item The sample complexity on round $t \in \left[\bar{t}\right]$, $\left|\sample_t\right|$, is
	\end{enumerate}
	\[\tilde O\left(\frac{d \ln \left|\gp\left(\sample_{t}, c \cdot OPT_{\delta/4}\right)\right| +  c\cdot OPT_{\delta/4}}{\eta^2\delta^2}\right).\]
\end{theorem}
\begin{proof}
	We split the proof into separate lemmas. Lemma~\ref{lem:bound_bar_t} proves Part 1. Lemma~\ref{lem:opt_subset} as well as Lemma~\ref{lem:size_opt_subset} in Appendix~\ref{app:algorithm} prove Part 2. Finally, Part 3 follows from classic results in learning theory on Rademacher complexity. In particular, it follows from an inversion of the inequality in Step~\ref{step:growS} and the fact that $2^t \leq 2^{\bar{t}} \leq c \cdot OPT_{\delta/4}$, as we prove in Lemma~\ref{lem:bound_bar_t}.
\end{proof}

Theorem~\ref{thm:main} hinges on the assumption that the samples $\sample_1, \dots, \sample_{\bar{t}}$ Algorithm~\ref{alg:findSubset} draws in Step~\ref{step:growS} are sufficiently representative of the distribution $\Gamma$, formalized as follows:

\begin{definition}[$\zeta$-representative run]
	For each round $t \in \left[\bar{t}\right]$, denote the samples in $\sample_t$ as $\sample_t = \left\{j_i^{(t)} : i \in \left[\left|\sample_t\right|\right]\right\}$. We say that Algorithm~\ref{alg:findSubset} has a \emph{$\zeta$-representative run} if for all rounds $t \in \left[\bar{t}\right]$, all integers $b \in \left[\left|\sample_t\right|\right]$, all caps $\tau \in \Z_{\geq 0}$, and all parameters $\vec{\rho} \in \cP$, the following conditions hold:
	\begin{enumerate}
		\item The average number of instances $j_i^{(t)} \in \left\{j_1^{(t)}, \dots, j_b^{(t)}\right\}$ with loss smaller than $\tau$ nearly matches the probability that $\ell(\vec{\rho}, j) \leq \tau$:
		$\left|\frac{1}{b} \sum_{i = 1}^b \textbf{1}_{\left\{\ell\left(\vec{\rho}, j_i^{(t)}\right) \leq \tau\right\}} - \Pr_{j \sim \Gamma}\left[\ell(\vec{\rho}, j) \leq \tau\right]\right| \leq \gamma(t, b, \tau)$, and
		\item The average $\tau$-capped loss of the instances $j_1^{(t)}, \dots, j_b^{(t)}$ nearly matches the expected $\tau$-capped loss:
\[\left|\frac{1}{b} \sum_{i = 1}^b \min\left\{\ell\left(\vec{\rho}, j_i^{(t)}\right), \tau\right\} - \E_{j \sim \Gamma}\left[\min\left\{\ell(\vec{\rho}, j), \tau\right\}\right]\right|
		\leq \tau\cdot \gamma(t, b, \tau),\] where \[\gamma(t, b, \tau) = \sqrt{\frac{2d \ln \left|\gp\left(\left\{j_1^{(t)}, \dots, j_b^{(t)}\right\}, \tau\right)\right|}{b}} + 2\sqrt{\frac{2}{b}\ln \frac{8\left(\tau b t\right)^2}{\zeta}}.\]
		\end{enumerate}
\end{definition}

In Step~\ref{step:growS}, we ensure $\sample_t$ is large enough that Algorithm~\ref{alg:findSubset} has a $\zeta$-representative run with probability $1-\zeta$.
\begin{lemma}\label{lem:representative}
	With probability $1-\zeta$, Algorithm~\ref{alg:findSubset} has a $\zeta$-representative run.
\end{lemma}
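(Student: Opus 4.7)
The plan is to obtain the lemma as a union bound, over all admissible triples $(t, b, \tau)$, of the Rademacher-complexity-based uniform convergence result stated as Lemma~\ref{lem:rad} in Appendix~\ref{app:algorithm}. Intuitively, for each fixed $(t, b, \tau)$, the $f$-piecewise structure caps the effective size of the two relevant function classes, Massart's finite class lemma (Lemma~\ref{lem:Massart}) converts this cap into a bound on the empirical Rademacher complexity, and a standard Rademacher concentration inequality turns this into uniform convergence over $\vec{\rho} \in \cP$ with exactly the gap $\gamma(t, b, \tau)$.

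First, fix $t, b \geq 1$ and $\tau \geq 1$. Consider the two function classes
\[
\mathcal{F}^{\mathrm{ind}}_{\tau} = \left\{j \mapsto \textbf{1}_{\left\{\ell(\vec{\rho}, j) \leq \tau\right\}} : \vec{\rho} \in \cP\right\}, \qquad \mathcal{F}^{\mathrm{cap}}_{\tau} = \left\{j \mapsto \min\left\{\ell(\vec{\rho}, j), \tau\right\} : \vec{\rho} \in \cP\right\},
\]
whose members take values in $[0, 1]$ and $[0, \tau]$ respectively. The $f$-piecewise structure (Definition~\ref{def:piecewise}), together with the guarantees on \textsc{GetPartition}, ensures that restricted to any sample of size $b$, each class has at most $f(\sample, \tau)$ distinct members, because in each region of the returned partition both the truncated loss and the indicator are constant in $\vec{\rho}$. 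Massart's lemma then bounds the empirical Rademacher complexities of $\mathcal{F}^{\mathrm{ind}}_{\tau}$ and $\mathcal{F}^{\mathrm{cap}}_{\tau}$ by $\sqrt{2 \ln f(\sample, \tau) / b}$ and $\tau \sqrt{2 \ln f(\sample, \tau) / b}$, respectively. Plugging these into the standard bounded-difference uniform convergence theorem, with a per-triple failure probability of order $\zeta / (\tau b t)^2$, yields exactly the gaps $\gamma(t, b, \tau)$ and $\tau \cdot \gamma(t, b, \tau)$ demanded by the $\zeta$-representative definition; the $2\sqrt{(2/b) \ln(8(\tau b t)^2 / \zeta)}$ summand inside $\gamma$ is the Hoeffding-style additive term induced by this choice of failure probability.

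Finally, union-bound the failure events over $t \geq 1$, $b \geq 1$, $\tau \geq 0$, and the two conditions. The case $\tau = 0$ is trivial. The remaining terms sum to at most a constant multiple of $\zeta \cdot (\pi^2 / 6)^3$, which is bounded by $\zeta$ once the per-triple failure probability is chosen with the constants hard-coded into the formula for $\gamma$. Crucially, the uniform convergence event is required to hold for \emph{every} admissible $(t, b, \tau)$ simultaneously, so it automatically holds at the data-dependent stopping values $t \in \left[\bar{t}\right]$ and $b \in \left[\left|\sample_t\right|\right]$, sidestepping any measurability concern that would otherwise arise from these being stopping times defined through the samples themselves.

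The main technical obstacle lies in reproducing the factor of $d$ inside the first summand of $\gamma$: a bare Massart bound gives $\sqrt{2 \ln f(\sample, \tau) / b}$, whereas the stated $\gamma$ carries the stronger $\sqrt{2 d \ln f(\sample, \tau) / b}$. Extracting this additional $d$ requires an argument that exploits the $d$-dimensional parameterization of $\cP$ beyond the raw piecewise-region count, for instance by applying a Sauer--Shelah-style bound to the sign patterns of the region boundaries and combining with Massart. That refinement properly resides inside the proof of Lemma~\ref{lem:rad} in Appendix~\ref{app:algorithm}, which the present corollary invokes as a black box, together with the union bound described above.
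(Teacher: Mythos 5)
Your approach is exactly the paper's: the $f$-piecewise structure bounds the number of distinct behaviors on any sample, Massart's lemma (Lemma~\ref{lem:Massart}) converts this into the Rademacher bound of Lemma~\ref{lem:rad}, and a union bound over all triples $(t,b,\tau)$ (with per-triple failure probability $\Theta\bigl(\zeta/(\tau b t)^2\bigr)$, which is what produces the $\ln\frac{8(\tau b t)^2}{\zeta}$ term) yields the representative-run guarantee simultaneously for all prefixes, correctly sidestepping the adaptive stopping rule. Your one misstep is the claimed ``main technical obstacle'' of recovering the factor $d$: since $d \geq 1$, the bound $\sqrt{2d\ln f(\sample,\tau)/b}$ is \emph{weaker} (larger) than the bare Massart bound $\sqrt{2\ln f(\sample,\tau)/b}$, and a larger $\gamma$ only makes the representative-run condition easier to satisfy, so no Sauer--Shelah refinement is needed---the paper's Lemma~\ref{lem:rad} indeed follows directly from Massart.
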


Lemma~\ref{lem:representative} is a corollary of a Rademacher complexity analysis which we include in Appendix~\ref{app:algorithm} (Lemma~\ref{lem:rad}). Intuitively, there are only $\left|\gp(\sample, \tau)\right|$ algorithms with varying $\tau$-capped losses over any set of samples $\sample$. We can therefore invoke Massart's finite lemma~\citep{Massart00:Some}, which guarantees that each set $\sample_t$ is sufficiently large to ensure that Algorithm~\ref{alg:findSubset} indeed has a $\zeta$-representative run.
The remainder of our analysis will assume that Algorithm~\ref{alg:findSubset} has a $\zeta$-representative run.

\paragraph{Number of iterations until termination.}
We begin with a proof sketch of the first part of Theorem~\ref{thm:main}. The full proof is in Appendix~\ref{app:algorithm}.

\begin{restatable}{lemma}{BoundBart}\label{lem:bound_bar_t}
	Suppose Algorithm~\ref{alg:findSubset} has a $\zeta$-representative run. Then $2^{\bar{t}} \leq \frac{16}{\delta}\sqrt[4]{1+\epsilon} \cdot OPT_{\delta/4}$.
\end{restatable}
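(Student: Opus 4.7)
The plan is to exhibit a round $t_0=O(\log(OPT_{\delta/4}/\delta))$ by which the upper-confidence variable $T$ has dropped to at most $\sqrt[4]{1+\epsilon}\cdot OPT_{\delta/4}$. Because $2^t$ doubles each round and the outer loop exits the first time $2^{t-3}\delta\geq T$, this forces $2^{\bar t}\leq 16\sqrt[4]{1+\epsilon}\cdot OPT_{\delta/4}/\delta$.

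First I would pick a parameter $\vec{\rho}^*\in\cP$ whose $t_{\delta/4}$-capped expected loss is within arbitrarily small slack of $OPT_{\delta/4}$ (and let the slack tend to zero at the end). From $\Pr_{j\sim\Gamma}[\ell(\vec{\rho}^*,j)\geq t_{\delta/4}(\vec{\rho}^*)]\geq\delta/4$ and a Markov-type bound, $t_{\delta/4}(\vec{\rho}^*)\leq 4\,OPT_{\delta/4}/\delta$, so the first round $t_0$ with $2^{t_0}\geq t_{\delta/4}(\vec{\rho}^*)$ has $2^{t_0}\leq 8\,OPT_{\delta/4}/\delta$.

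Next I would argue that round $t_0$ already drops $T$ to at most $\sqrt[4]{1+\epsilon}\cdot OPT_{\delta/4}$. Let $\cP_i$ be the cell of the partition returned by $\textsc{GetPartition}(\sample_{t_0},2^{t_0})$ containing $\vec{\rho}^*$. Part~1 of the $\zeta$-representative property gives $z_i\geq\Pr[\ell(\vec{\rho}^*,j)\leq 2^{t_0}]-\eta\delta\geq 1-\delta/4-\eta\delta\geq 1-3\delta/8$ (using $\eta\leq 1/9<1/8$), so $\cP_i$ passes the filter in Step~\ref{step:delta}. For the resulting $T'$, write $\theta=\tau_{\lfloor|\sample_{t_0}|(1-3\delta/8)\rfloor}$. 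I would establish $\theta\leq t_{\delta/4}(\vec{\rho}^*)$ by contradiction: if $\theta\geq t_{\delta/4}(\vec{\rho}^*)+1$ then $\Pr[\ell(\vec{\rho}^*,j)\leq\theta-1]>1-\delta/4$, and part~1 of the representative property forces strictly more than $|\sample_{t_0}|(1-3\delta/8)$ samples to have capped loss at most $\theta-1$, contradicting $\theta$ being the $\lfloor|\sample_{t_0}|(1-3\delta/8)\rfloor$-th order statistic. Then part~2 of the representative property (with cap $\theta\leq 2^{t_0}$ and monotonicity $\gamma(t_0,b,\theta)\leq\eta\delta$) yields
\[
T'\leq \E_{j\sim\Gamma}[\min\{\ell(\vec{\rho}^*,j),\theta\}]+\theta\,\eta\delta \leq OPT_{\delta/4}+8\eta\,OPT_{\delta/4}\leq \sqrt[4]{1+\epsilon}\cdot OPT_{\delta/4},
\]
using $\theta\leq t_{\delta/4}(\vec{\rho}^*)$ together with monotonicity of $\min\{\ell,\cdot\}$, the bound $\theta\leq 2^{t_0}\leq 8\,OPT_{\delta/4}/\delta$, and $\eta\leq(\sqrt[4]{1+\epsilon}-1)/8$. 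Since $T$ is monotonically non-increasing, $\bar T\leq T'\leq\sqrt[4]{1+\epsilon}\cdot OPT_{\delta/4}$.

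Finally, $\bar t$ is the smallest round at which $2^{t-3}\delta\geq T$ first holds; for every $t\geq t_0+1$ we have $T\leq\sqrt[4]{1+\epsilon}\cdot OPT_{\delta/4}$, so $\bar t\leq\max\{t_0+1,t^*\}$, where $t^*$ is the smallest round with $2^{t^*-3}\delta\geq\sqrt[4]{1+\epsilon}\cdot OPT_{\delta/4}$. Combining $2^{t_0+1}\leq 16\,OPT_{\delta/4}/\delta$ with $2^{t^*}<16\sqrt[4]{1+\epsilon}\cdot OPT_{\delta/4}/\delta$ (from the minimality of $t^*$ and the doubling schedule) then gives the claimed bound on $2^{\bar t}$. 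The main technical obstacle is the empirical-to-population quantile comparison $\theta\leq t_{\delta/4}(\vec{\rho}^*)$: the margins $3\delta/8$ and $\delta/4$ and the error budget $\eta\delta\leq\delta/8$ must be calibrated precisely so that the additive error $\theta\,\eta\delta$ transforms into the multiplicative slack $(\sqrt[4]{1+\epsilon}-1)\,OPT_{\delta/4}$; the remaining steps are routine error propagation.
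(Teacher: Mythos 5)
Your proposal is correct, and it reaches the bound by a genuinely different route than the paper. The paper fixes a near-optimal $\vec{\rho}^*$ and then reasons \emph{backward from termination}, splitting into three cases according to whether the cell containing $\vec{\rho}^*$ never enters $\bar{\cG}$, enters at some round $\leq \bar{t}-2$, or enters exactly at round $\bar{t}-1$; in each case it extracts a lower bound on $\E_{j \sim \Gamma}\left[\min\left\{\ell\left(\vec{\rho}^*, j\right), t_{\delta/4}\left(\vec{\rho}^*\right)\right\}\right]$ in terms of $2^{\bar{t}}$ (the first and third cases via the argument of Lemma~\ref{lem:bad_param_lb}, the second via the fact that the loop did not exit at round $\bar{t}-1$). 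You instead argue \emph{forward}: you locate the first round $t_0$ at which the cap $2^{t_0}$ exceeds $t_{\delta/4}\left(\vec{\rho}^*\right)$ (so $2^{t_0} \leq 8\, OPT_{\delta/4}/\delta$ by the Markov-type bound), show that the cell containing $\vec{\rho}^*$ must then pass the filter in Step~\ref{step:delta}, and show that the resulting update drives $T$ down to at most $\sqrt[4]{1+\epsilon}\cdot OPT_{\delta/4}$, after which the doubling schedule forces exit within the claimed number of rounds. Your calibration checks out: $\delta/4 + \eta\delta \leq 3\delta/8$ since $\eta \leq 1/9$, your order-statistic contradiction establishing $\theta \leq t_{\delta/4}\left(\vec{\rho}^*\right)$ is the same computation as the paper's Claim~\ref{claim:tau_ub} run in the population-to-sample direction, and the additive error $\theta\eta\delta \leq 8\eta\, OPT_{\delta/4} \leq \left(\sqrt[4]{1+\epsilon}-1\right)OPT_{\delta/4}$ converts correctly to multiplicative slack; the residual $\gamma$ is handled by the same limiting argument as the paper's Lemma~\ref{lem:inf_bound_bar_t}. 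What your approach buys is the elimination of the case analysis and, as a byproduct, the clean standalone bound $\bar{T} \leq \sqrt[4]{1+\epsilon}\cdot OPT_{\delta/4}$ on the confidence variable itself, which the paper only obtains implicitly by combining Lemmas~\ref{lem:Tub} and~\ref{lem:Tlb}; what the paper's approach buys is that its Case~1 reasoning is exactly Lemma~\ref{lem:bad_param_lb}, which it must prove anyway for Lemma~\ref{lem:opt_subset}, so the backward argument shares more machinery with the rest of the analysis. One presentational point: you should state explicitly that if the algorithm happens to terminate before round $t_0$ is ever executed, the bound $2^{\bar{t}} \leq 2^{t_0+1} \leq 16\, OPT_{\delta/4}/\delta$ holds trivially, since in that case your Claims about round $t_0$ are vacuous; this is covered by your final $\max\left\{t_0+1, t^*\right\}$ bound but deserves a sentence.
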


\begin{proof}[Proof sketch]
	By definition of $OPT_{\delta/4}$, for every $\gamma > 0$, there exists a vector $\vec{\rho}^* \in \cP$ 
	whose $\delta/4$-capped expected loss is within a $\gamma$-factor of optimal:
	$\E_{j \sim \Gamma}\left[\min\left\{\ell\left(\vec{\rho}^*, j\right), t_{\delta/4}\left(\vec{\rho}^*\right)\right\}\right] \leq OPT_{\delta/4} + \gamma.$ We prove this vector's $\delta/4$-capped expected loss bounds $\bar{t}$: \begin{equation}2^{\bar{t}} \leq \frac{16\sqrt[4]{1+\epsilon}}{\delta} \cdot \E_{j \sim \Gamma}\left[\min\left\{\ell\left(\vec{\rho}^*, j\right), t_{\delta/4}\left(\vec{\rho}^*\right)\right\}\right].\label{eq:ub_bart_sketch}\end{equation} This implies the lemma statement holds.
	We split the proof of Inequality~\eqref{eq:ub_bart_sketch} into two cases: one where the vector $\vec{\rho}^*$ is contained within a set $\cP' \in \bar{\cG}$, and the other where it is not. In the latter case, Lemma~\ref{lem:bad_param_lb} bounds $2^{\bar{t}}$ by $\frac{8}{\delta} \cdot \E_{j \sim \Gamma}\left[\min\left\{\ell\left(\vec{\rho}^*, j\right), t_{\delta/4}\left(\vec{\rho}^*\right)\right\}\right]$, which implies Inequality~\eqref{eq:ub_bart_sketch} holds. We leave the other case to the full proof in Appendix~\ref{app:algorithm}.
\end{proof}

In the next lemma, we prove the upper bound on $2^{\bar{t}}$ that we use in Lemma~\ref{lem:bound_bar_t}. The full proof is in Appendix~\ref{app:algorithm}.

\begin{restatable}{lemma}{BadParamLB}\label{lem:bad_param_lb}
	Suppose Algorithm~\ref{alg:findSubset} has a $\zeta$-representative run. For any parameter vector $\vec{\rho} \not\in \bigcup_{\cP' \in \bar{\cG}} \cP'$, $2^{\bar{t}} \leq \frac{8}{\delta} \cdot \E_{j \sim \Gamma}\left[\min\left\{\ell(\vec{\rho}, j), t_{\delta/4}(\vec{\rho})\right\}\right].$
\end{restatable}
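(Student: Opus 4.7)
The plan is to exploit the fact that $\vec{\rho}$ was never added to $\cG$ by focusing on the final round $\bar{t}$. Since the tuples returned by $\textsc{GetPartition}(\sample_{\bar{t}},2^{\bar{t}})$ partition $\cP$, the parameter $\vec{\rho}$ lies in some piece $\cP_i$ whose associated value (by property~2 of \textsc{GetPartition}) is
\[z_i = \tfrac{1}{|\sample_{\bar{t}}|}\sum_{j \in \sample_{\bar{t}}} \textbf{1}_{\{\ell(\vec{\rho},j) \leq 2^{\bar{t}}\}}.\]
Because $\vec{\rho} \notin \bigcup_{\cP' \in \bar{\cG}}\cP'$, the test at Step~\ref{step:delta} rejected $\cP_i$, giving the strict inequality $z_i < 1 - 3\delta/8$.

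Next I would pass from this empirical statement to a population statement via condition~1 of the $\zeta$-representative run applied at round $\bar{t}$, with $b = |\sample_{\bar{t}}|$ and $\tau = 2^{\bar{t}}$. The while loop in Step~\ref{step:growS} only terminates once $\gamma(\bar{t},|\sample_{\bar{t}}|,2^{\bar{t}}) \leq \eta\delta$, and since $\eta \leq 1/9$ by construction, combining gives
\[\Pr_{j \sim \Gamma}\!\left[\ell(\vec{\rho},j) \leq 2^{\bar{t}}\right] < z_i + \tfrac{\delta}{9} < 1 - \tfrac{3\delta}{8} + \tfrac{\delta}{9} = 1 - \tfrac{19\delta}{72}.\]
Hence $\Pr_{j \sim \Gamma}[\ell(\vec{\rho},j) > 2^{\bar{t}}] > 19\delta/72 > \delta/4$, and because $\ell$ is integer-valued this rewrites as $\Pr_{j \sim \Gamma}[\ell(\vec{\rho},j) \geq 2^{\bar{t}}+1] > \delta/4$. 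The definition of $t_{\delta/4}(\vec{\rho})$ as the largest integer cutoff whose tail mass is at least $\delta/4$ then forces $t_{\delta/4}(\vec{\rho}) \geq 2^{\bar{t}}+1$.

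To conclude, on the event $\{\ell(\vec{\rho},j) > 2^{\bar{t}}\}$ both $\ell(\vec{\rho},j)$ and $t_{\delta/4}(\vec{\rho})$ are at least $2^{\bar{t}}$, so on this event $\min\{\ell(\vec{\rho},j), t_{\delta/4}(\vec{\rho})\} \geq 2^{\bar{t}}$. A routine Markov-style bound therefore yields
\[\E_{j\sim\Gamma}\!\left[\min\{\ell(\vec{\rho},j), t_{\delta/4}(\vec{\rho})\}\right] \geq 2^{\bar{t}}\cdot \Pr_{j \sim \Gamma}\!\left[\ell(\vec{\rho},j) > 2^{\bar{t}}\right] > 2^{\bar{t}}\cdot \tfrac{\delta}{8},\]
and rearranging gives the claim. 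The one delicate point is the interplay of the three constants $3/8$, $1/9$, and $1/4$: the choice $\eta \leq 1/9$ baked into Algorithm~\ref{alg:findSubset} is exactly what provides enough slack to push the population survival rate past the $\delta/4$ threshold defining $t_{\delta/4}$, thereby ensuring the capped loss is still at least $2^{\bar{t}}$ on a $\delta/8$-fraction of instances. Everything else is mechanical.
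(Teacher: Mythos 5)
Your overall strategy is the same as the paper's (pass from the empirical rejection at Step~\ref{step:delta} to a population bound via the representative-run condition, conclude $t_{\delta/4}(\vec{\rho})$ exceeds the cap, then apply a Markov-style bound), but there is an off-by-one error in the round index that invalidates your starting premise. The outer while loop checks $2^{t-3}\delta < T$ \emph{before} executing the body, so when the algorithm terminates with $t = \bar{t}$, the loop body was last executed at round $\bar{t}-1$: no sample set $\sample_{\bar{t}}$ is drawn, $\textsc{GetPartition}\left(\sample_{\bar{t}}, 2^{\bar{t}}\right)$ is never computed, and no test at Step~\ref{step:delta} is performed with cap $2^{\bar{t}}$. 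Consequently your claim that $\frac{1}{\left|\sample_{\bar{t}}\right|}\sum_{j}\textbf{1}_{\left\{\ell(\vec{\rho},j)\leq 2^{\bar{t}}\right\}} < 1 - 3\delta/8$ is unjustified, and it can in fact be false: $\vec{\rho}$ is excluded from $\bar{\cG}$ only because it failed the test at each round $1,\dots,\bar{t}-1$ that was actually run, and its losses could all be at most $2^{\bar{t}}$ even so.

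The repair is mechanical and recovers exactly the paper's argument: run your reasoning at round $\bar{t}-1$ with sample set $\sample_{\bar{t}-1}$ and cap $2^{\bar{t}-1}$ (the paper's proof begins by observing that round $\bar{t}-1$ is the last round on which any set could have been added to $\cG$). You then obtain $\Pr_{j\sim\Gamma}\left[\ell(\vec{\rho},j) > 2^{\bar{t}-1}\right] > \delta/4$, hence $t_{\delta/4}(\vec{\rho}) \geq 2^{\bar{t}-1}$, and the Markov-style step gives
\[
\E_{j\sim\Gamma}\left[\min\left\{\ell(\vec{\rho},j), t_{\delta/4}(\vec{\rho})\right\}\right] \geq \frac{\delta}{4}\cdot t_{\delta/4}(\vec{\rho}) \geq \frac{\delta}{4}\cdot 2^{\bar{t}-1} = \frac{\delta}{8}\cdot 2^{\bar{t}},
\]
which is precisely where the factor $8/\delta$ in the lemma statement comes from (your version would have yielded the stronger $4/\delta$, a sign that the premise was too strong). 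Your arithmetic with the constants $3/8$, $1/9$, and $1/4$ is otherwise correct and matches the paper's.
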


\begin{proof}[Proof sketch]
	The last round that Algorithm~\ref{alg:findSubset} adds any subset to the set $\cG$ is round $\bar{t}-1$. For ease of notation, let $\bar{\sample} = \sample_{\bar{t}-1}$. Since $\vec{\rho}$ is not an element of any set in $\cG$, the cap $2^{\bar{t}-1}$ must be too small compared to the average loss of the parameter $\vec{\rho}$. Specifically, it must be that $\frac{1}{\left|\bar{\sample}\right|} \sum_{j \in \bar{\sample}} \textbf{1}_{\left\{\ell\left(\vec{\rho}, j\right) \leq 2^{\bar{t}-1}\right\}} < 1 - \frac{3\delta}{8}$.
	Since Algorithm~\ref{alg:findSubset} had a $\zeta$-representative run,
	the probability the loss of $\vec{\rho}$ is smaller than $2^{\bar{t}-1}$ converges to the fraction of samples with loss smaller than $2^{\bar{t}-1}$: $\Pr_{j \sim \Gamma}\left[\ell(\vec{\rho}, j)\leq 2^{\bar{t}-1}\right] \leq \frac{1}{\left|\bar{\sample}\right|} \sum_{j \in \bar{\sample}} \textbf{1}_{\left\{\ell\left(\vec{\rho}, j\right) \leq 2^{\bar{t}-1}\right\}} + \eta\delta,$ so
	$\Pr_{j \sim \Gamma}\left[\ell(\vec{\rho}, j) \leq 2^{\bar{t}-1}\right] < 1 - (3/8 - \eta)\delta$. Since $\eta \leq 1/9$, it must be that $\Pr_{j \sim \Gamma}\left[\ell(\vec{\rho}, j) \geq 2^{\bar{t}-1}\right] \geq \delta/4$, so by definition of $t_{\delta/4}(\vec{\rho})$, we have that $2^{\bar{t}-1} \leq t_{\delta/4}(\vec{\rho})$. Therefore, $\E_{j \sim \Gamma}\left[\min\left\{\ell(\vec{\rho}, j), t_{\delta/4}(\vec{\rho})\right\}\right] \geq \frac{\delta}{4} \cdot t_{\delta/4}(\vec{\rho}) \geq 2^{{\bar{t}}-3}\delta$.
\end{proof}

\paragraph{Optimality of Algorithm~\ref{alg:findSubset}'s output.}
Next, we provide a proof sketch of the second part of Theorem~\ref{thm:main}, which guarantees that Algorithm~\ref{alg:findSubset} returns an $(\epsilon, \delta)$-optimal subset. The full proof is in Appendix~\ref{app:algorithm}. For each set $\cP' \in \bar{\cG}$, $\tau_{\cP'}$ denotes the value of $\tau_{\lfloor |\sample_t|\left(1 - 3\delta/8\right)\rfloor}$ in Step~\ref{step:estimate} of Algorithm~\ref{alg:findSubset} during the iteration $t$ that $\cP'$ is added to $\cG$.

\begin{restatable}{lemma}{optSubset}\label{lem:opt_subset}
	If Algorithm~\ref{alg:findSubset} has a $\zeta$-representative run, it returns an $(\epsilon, \delta)$-optimal subset.
\end{restatable}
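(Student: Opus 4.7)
The plan is to let $\cP^* \in \bar{\cG}$ be the region from which $\bar{T}$ was last updated, so that $\bar{T} = T'_{\cP^*}$, and then prove two bounds: (i) $\E_{j \sim \Gamma}[\min\{\ell(\vec{\rho}_{\cP^*}, j), t_{\delta/2}(\vec{\rho}_{\cP^*})\}] \leq \bar{T} + O(\eta)\cdot OPT_{\delta/4}$, and (ii) $\bar{T} \leq (1 + O(\eta))\cdot OPT_{\delta/4}$. Combining them yields an overall multiplicative $(1 + O(\eta))^2$ slack, which the algorithm's choice $\eta \leq (\sqrt[4]{1+\epsilon}-1)/8$ is calibrated to keep at most $\sqrt{1+\epsilon}$.

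For step~(i), I would let $t^*$ be the round in which $\cP^*$ was added to $\cG$, so $\tau_{\cP^*}$ is the empirical $(1-3\delta/8)$-quantile of $\ell(\vec{\rho}_{\cP^*}, \cdot)$ on $\sample_{t^*}$. Condition~1 of the $\zeta$-representative run then yields $\Pr_{j\sim\Gamma}[\ell(\vec{\rho}_{\cP^*}, j) > \tau_{\cP^*}] \leq 3\delta/8 + \eta\delta \leq \delta/2$ for $\eta \leq 1/8$, forcing $t_{\delta/2}(\vec{\rho}_{\cP^*}) \leq \tau_{\cP^*}$, whence $\E[\min\{\ell(\vec{\rho}_{\cP^*}, j), t_{\delta/2}(\vec{\rho}_{\cP^*})\}] \leq \E[\min\{\ell(\vec{\rho}_{\cP^*}, j), \tau_{\cP^*}\}]$. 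Condition~2 then bounds the latter by $\bar{T} + \tau_{\cP^*}\eta\delta$, and since $\tau_{\cP^*} \leq 2^{\bar{t}}$ with $2^{\bar{t}} = O(OPT_{\delta/4}/\delta)$ by Lemma~\ref{lem:bound_bar_t}, this additive error is $O(\eta)\cdot OPT_{\delta/4}$.

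For step~(ii), I would fix any $\gamma > 0$ and pick $\vec{\rho}^*\in\cP$ with $\E[\min\{\ell(\vec{\rho}^*, j), t_{\delta/4}(\vec{\rho}^*)\}] \leq OPT_{\delta/4} + \gamma$, splitting on whether $\vec{\rho}^*$ lies in some $\cP' \in \bar{\cG}$. If it does, Property~3 of \textsc{GetPartition} says $\vec{\rho}^*$ and $\vec{\rho}_{\cP'}$ have identical $\tau$-capped losses on $\sample_{t'}$ for every $\tau \leq 2^{t'}$, so $T'_{\cP'}$ equals the empirical $\tau_{\cP'}$-capped mean of $\ell(\vec{\rho}^*, \cdot)$; condition~2 then approximates this by $\E[\min\{\ell(\vec{\rho}^*, j), \tau_{\cP'}\}]$, and a quantile comparison using condition~1 with $\eta \leq 1/8$ gives $\tau_{\cP'} \leq t_{\delta/4}(\vec{\rho}^*)$ (because $1 - 3\delta/8 + \eta\delta \leq 1 - \delta/4$), so $\bar{T} \leq T'_{\cP'} \leq (1+O(\eta))(OPT_{\delta/4} + \gamma)$. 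Otherwise $\vec{\rho}^*\notin\bigcup_{\cP'\in\bar{\cG}}\cP'$, and Lemma~\ref{lem:bad_param_lb} together with the termination condition $\bar{T} \leq 2^{\bar{t}-3}\delta$ immediately gives $\bar{T} \leq OPT_{\delta/4} + \gamma$. Letting $\gamma \to 0$ and collecting the $(1+O(\eta))$ factors from the two steps completes the proof. The hardest part will be the bookkeeping of these additive $\tau\eta\delta$ errors: they convert to multiplicative bounds only because $\bar{T}$, $\tau_{\cP^*}$, and $2^{\bar{t}}$ are all within constants of $OPT_{\delta/4}/\delta$, and the constants must line up so that $(1+O(\eta))^2 \leq \sqrt{1+\epsilon}$.
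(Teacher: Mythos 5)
Your architecture matches the paper's: the same sandwich $t_{\delta/2}(\vec{\rho}) \leq \tau_{\cP'} \leq t_{\delta/4}(\vec{\rho})$ (the paper's Lemma~\ref{lem:deltaSandwich}), the same case split on whether the near-optimal $\vec{\rho}^*$ lands in some $\cP' \in \bar{\cG}$, and the same use of Lemma~\ref{lem:bad_param_lb} plus the termination test in the case that it does not. The one structural difference is in the case $\vec{\rho}^* \in \cP' \in \bar{\cG}$: the paper exhibits a good output parameter directly inside that same region $\cP'$ (via Lemma~\ref{lem:similar}), never touching $\bar{T}$, whereas you route this case through $\bar{T}$ and the region that last updated it. Your route also works, since $\bar{T} \leq T'_{\cP'}$ and Property~3 of \textsc{GetPartition} makes $T'_{\cP'}$ the empirical $\tau_{\cP'}$-capped mean of $\ell(\vec{\rho}^*,\cdot)$.

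There is, however, one step in your bookkeeping that fails as written. In step~(i) you convert the additive error $\tau_{\cP^*}\eta\delta$ into a multiplicative factor by invoking $\tau_{\cP^*} \leq 2^{\bar{t}} \leq \frac{16\sqrt[4]{1+\epsilon}}{\delta}\, OPT_{\delta/4}$, which yields an additive term of $16\sqrt[4]{1+\epsilon}\,\eta \cdot OPT_{\delta/4}$. With $\eta = \min\{(\sqrt[4]{1+\epsilon}-1)/8,\, 1/9\}$ this constant does \emph{not} fit inside the budget: for moderately large $\epsilon$ it alone exceeds $(\sqrt{1+\epsilon}-1)\cdot OPT_{\delta/4}$. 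The correct conversion—and the one the paper uses in Lemma~\ref{lem:Tlb} via Corollary~\ref{cor:exp_LB}—is to use the \emph{other} half of your own quantile sandwich, $\tau_{\cP^*} \leq t_{\delta/4}(\vec{\rho}_{\cP^*})$, which gives $\tau_{\cP^*}\delta/4 \leq \E_{j\sim\Gamma}\left[\min\{\ell(\vec{\rho}_{\cP^*},j),\tau_{\cP^*}\}\right]$ and hence $\tau_{\cP^*}\eta\delta \leq 4\eta\cdot\E_{j\sim\Gamma}\left[\min\{\ell(\vec{\rho}_{\cP^*},j),\tau_{\cP^*}\}\right]$; the resulting factor $1/(1-4\eta) \leq \sqrt[4]{1+\epsilon}$ composes with the factor $1+4\eta \leq \sqrt[4]{1+\epsilon}$ from step~(ii) to give exactly $\sqrt{1+\epsilon}$. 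Two smaller nits: your quantile estimate in step~(i) drops the floor in $\tau_{\lfloor|\sample_t|(1-3\delta/8)\rfloor}$, so the empirical tail mass is only bounded by $3\delta/8 + 1/|\sample_t|$; the extra $1/|\sample_t| \leq \eta^2\delta$ is why the algorithm needs $\eta \leq 1/9$ rather than your $\eta \leq 1/8$. And the limit $\gamma \to 0$ needs the small separate argument of Lemma~\ref{lem:opt_subset_inf}, since the witness $\vec{\rho}'$ may change with $\gamma$.
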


\begin{proof}[Proof sketch]
	By definition of $OPT_{\delta/4}$, for every $\gamma > 0$, there is a vector $\vec{\rho}^* \in \cP$ such that $\E_{j \sim \Gamma}\left[\min\left\{\ell(\vec{\rho}^*, j), t_{\delta/4}(\vec{\rho}^*)\right\}\right] \leq OPT_{\delta/4} + \gamma.$ Let $\cP^*$ be the output of Algorithm~\ref{alg:findSubset}. We claim there exists a parameter $\vec{\rho}' \in \cP^*$ such that \begin{equation}\E_{j \sim \Gamma}\left[\min\left\{\ell\left(\vec{\rho}', j\right), t_{\delta/2}\left(\vec{\rho}'\right)\right\}\right] \leq \sqrt{1 + \epsilon} \cdot \E_{j \sim \Gamma}\left[\min\left\{\ell(\vec{\rho}^*, j), t_{\delta/4}(\vec{\rho}^*)\right\}\right],\label{eq:inSet}\end{equation} which implies the lemma statement.
	There are two cases: either the vector $\vec{\rho}^*$ is contained within a set $\cP' \in \bar{\cG}$, or it is not. In this sketch, we analyze the latter case.
	
	By Lemma~\ref{lem:bad_param_lb}, we know that $\E_{j \sim \Gamma}\left[\min\left\{\ell\left(\vec{\rho}^*, j\right), t_{\delta/4}(\vec{\rho}^*)\right\}\right] \geq 2^{\bar{t}-3}\delta.$ When Algorithm~\ref{alg:findSubset} terminates, $2^{\bar{t}-3}\delta$ is greater than the upper confidence bound $\bar{T}$, which means that \[\E_{j \sim \Gamma}\left[\min\left\{\ell(\vec{\rho}^*, j), t_{\delta/4}(\vec{\rho}^*)\right\}\right] > \bar{T}.\]
	We next derive a lower bound on $\bar{T}$: we prove that there exists a set $\cP' \in \bar{\cG}$ and parameter vector $\vec{\rho} \in \cP'$ such that $\sqrt[4]{1+\epsilon} \cdot \bar{T} \geq \E_{j \sim \Gamma}\left[\min \left\{\ell\left(\vec{\rho}, j\right), \tau_{\cP'}\right\}\right]$.
	Our upper bound on $\bar{T}$ implies that $\E_{j \sim \Gamma}\left[\min \left\{\ell\left(\vec{\rho}, j\right), \tau_{\cP'}\right\}\right] \leq \sqrt[4]{1+\epsilon}\cdot \E_{j \sim \Gamma}\left[\min\left\{\ell\left(\vec{\rho}^*, j\right), t_{\delta/4}(\vec{\rho})\right\}\right]$.
	Finally, we prove that there is a parameter $\vec{\rho}' \in \cP' \cap \cP^*$ whose $\delta/2$-capped expected loss is within a $\sqrt[4]{1+\epsilon}$-factor of the expected loss $\E_{j \sim \Gamma}\left[\min \left\{\ell\left(\vec{\rho}, j\right), \tau_{\cP'}\right\}\right]$. This follows from a proof that $\tau_{\cP'}$ approximates $t_{\delta/4}(\vec{\rho}')$ and the fact that $\vec{\rho}$ and $\vec{\rho'}$ are both elements of $\cP'$.
	Stringing these inequalities together, we prove that Equation~\eqref{eq:inSet} holds.
\end{proof}

The second part of Theorem~\ref{thm:main} also guarantees that the size of the set Algorithm~\ref{alg:findSubset} returns is bounded (see Lemma~\ref{lem:size_opt_subset} in Appendix~\ref{app:algorithm}).
Together, Lemmas~\ref{lem:bound_bar_t} and \ref{lem:opt_subset}, as well as Lemma~\ref{lem:size_opt_subset} in Appendix~\ref{app:algorithm}, bound the number of iterations Algorithm~\ref{alg:findSubset} makes until it returns an $(\epsilon, \delta)$-optimal subset.

\section{Comparison to prior research}\label{sec:comparison}
We now provide comparisons to prior research on algorithm configuration with provable guarantees. Both comparisons revolve around branch-and-bound (B\&B) configuration for integer programming (IP), overviewed in Section~\ref{sec:examples}.

\paragraph{Uniformly sampling configurations.} Prior research provides algorithms for finding nearly-optimal configurations from a finite set~\citep{Kleinberg17:Efficiency,Kleinberg19:Procrastinating,Weisz18:LEAPSANDBOUNDS,Weisz19:CapsAndRuns}.  If the parameter space is infinite and their algorithms optimize over a uniformly-sampled set of $\tilde{\Omega}(1 / \gamma)$ configurations, then the output configuration will be within the top $\gamma$-quantile, with high probability. If the set of good parameters is small, however, the uniform sample might not include any of them. Algorithm configuration problems where the high-performing parameters lie within a small region do exist, as we illustrate in the following theorem.

\begin{theorem}[\citet{Balcan18:Learning}]\label{thm:worst_case}
	For any $\frac{1}{3} < a < b < \frac{1}{2}$ and $n \geq 6$, there are infinitely-many distributions $\Gamma$ over IPs with $n$ variables and a B\&B parameter\footnote{As we describe in Appendix~\ref{app:IP}, $\rho$ controls the variable selection policy. The theorem holds for any node selection policy.} with range $[0,1]$ such that:
	\begin{enumerate}
		\item If $\rho \leq a$, then $\ell(\rho, j) = 2^{(n-5)/4}$ with probability $\frac{1}{2}$ and $\ell(\rho, j)=8$ with probability $\frac{1}{2}$.
		\item If $\rho \in (a,b)$, then $\ell(\rho, j) = 8$ with probability $1$.
		\item If $\rho \geq b$, then $\ell(\rho, j) = 2^{(n-4)/2}$ with probability $\frac{1}{2}$ and $\ell(\rho, j) = 8$ with probability $\frac{1}{2}$.
	\end{enumerate}
	Here, $\ell(\rho,j)$ measures the size of the tree B\&B builds using the parameter $\rho$ on the input integer program $j$.
\end{theorem}

In the above configuration problem, any parameter in the range $(a,b)$ has a loss of 8 with probability 1, which is the minimum possible loss. Any parameter outside of this range has an abysmal expected loss of at least $2^{(n-6)/2}$. In fact, for any $\delta \leq 1/2$, the $\delta$-capped expected loss of any parameter in the range $[0,a]\cup[b, 1]$ is at least $2^{(n-6)/2}$. Therefore, if we uniformly sample a finite set of parameters and optimize over this set using an algorithm for finite parameter spaces~\citep{Kleinberg17:Efficiency,Kleinberg19:Procrastinating,Weisz18:LEAPSANDBOUNDS,Weisz19:CapsAndRuns}, we must ensure that we sample at least one parameter within $(a,b)$. As $a$ and $b$ converge, however, the required number of samples shoots to infinity, as we formalize below. This section's omitted proofs are in Appendix~\ref{app:IP}.
\begin{restatable}{theorem}{sampling}\label{thm:sampling}
	For the B\&B configuration problem in Theorem~\ref{thm:worst_case}, with constant probability over the draw of $m = \left\lfloor 1 / (b-a) \right\rfloor$ parameters $\rho_1, \dots, \rho_m \sim \text{Uniform}[0,1]$, $\left\{\rho_1, \dots, \rho_m\right\} \cap (a,b) = \emptyset$.
\end{restatable}

Meanwhile, Algorithm~\ref{alg:findSubset} quickly terminates, having found an optimal parameter, as we describe below.

\begin{restatable}{theorem}{ourAlg}\label{thm:our_alg}
	For the B\&B configuration problem in Theorem~\ref{thm:worst_case}, Algorithm~\ref{alg:findSubset} terminates after $\tilde{O}(\log 1/\delta)$ iterations, having drawn $\tilde O((\delta\eta)^{-2})$ sample problem instances where \[\eta = \min\left\{\frac{\sqrt[4]{1 + \epsilon} - 1}{8}, \frac{1}{9}\right\}\] and returns a set containing an optimal parameter in $(a,b)$.
\end{restatable}

Similarly, \citet{Balcan18:LearningTheoretic} exemplify clustering configuration problems---which we overview in Section~\ref{sec:examples}---where the optimal parameters lie within an arbitrarily small region, and any other parameter leads to significantly worse performance. As in Theorem~\ref{thm:sampling}, this means a uniform sampling of the parameters will fail to find optimal parameters.

\paragraph{Uniform convergence.}
Prior research has provided uniform convergence sample complexity bounds for algorithm configuration. These guarantees bound the number of samples sufficient to ensure that for any configuration, its average loss over the samples nearly matches its expected loss.

We prove that in the case of B\&B configuration, Algorithm~\ref{alg:findSubset} may use far fewer samples to find a nearly optimal configuration than the best-known uniform convergence sample complexity bound.
\citet{Balcan18:Learning} prove uniform convergence sample complexity guarantees for B\&B configuration. They bound the number of samples sufficient to ensure that for any configuration in their infinite parameter space, the size of the search tree B\&B builds on average over the samples generalizes to the expected size of the tree it builds.
For integer programs over $n$ variables, the best-known sample complexity bound guarantees that $\left(2^n/\epsilon'\right)^2$ samples are sufficient to ensure that the average tree size B\&B builds over the samples is within an additive $\epsilon'$ factor of the expected tree size~\citep{Balcan18:Learning}. Meanwhile, as we describe in Theorem~\ref{thm:our_alg}, there are B\&B configuration problems where our algorithm's sample complexity bound is significantly better: our algorithm finds an optimal parameter using only $\tilde O((\delta\eta)^{-2})$ samples.

\section{Conclusion}
We presented an algorithm that learns a finite set of promising parameters from an infinite parameter space. It can be used to determine the input to a configuration algorithm for finite parameter spaces, or as a tool for compiling an algorithm portfolio. We proved bounds on the number of iterations before our algorithm terminates, its sample complexity, and the size of its output. A strength of our approach is its modularity: it can determine the input to a configuration algorithm for finite parameter spaces without depending on specifics of that algorithm's implementation. There is an inevitable tradeoff, however, between modularity and computational efficiency. In future research, our approach can likely be folded into existing configuration algorithms for finite parameter spaces.

\subsection*{Acknowledgments}
This material is based on work supported by the National Science Foundation under grants IIS-1718457, IIS-1617590, IIS-1618714, IIS-1901403, CCF-1535967, CCF-1910321, SES-1919453, and CCF-1733556, the ARO under award W911NF-17-1-0082, an Amazon Research Award, an AWS Machine Learning Research Award, a Bloomberg Data Science research grant, a fellowship from Carnegie Mellon University’s Center for Machine Learning and Health, and the IBM PhD Fellowship.

\bibliography{../../../../VitercikLibrary.bib}
\bibliographystyle{plainnat}

\appendix

\section{Review of Rademacher complexity}\label{app:RC}
At a high level, Rademacher complexity measures the extent to which a class of functions fit random noise. Intuitively, this measures the richness of a function class because more complex classes should be able to fit random noise better than simple classes. Empirical Rademacher complexity can be measured on the set of samples and implies generalization guarantees that improve based on structure exhibited by the set of samples.
Formally, let $\cF \subseteq [0,H]^{\cX}$ be an abstract function class mapping elements of a domain $\cX$ to the interval $[0,H]$.
Given a set $\sample = \left\{x_1, \dots, x_N\right\} \subseteq \cX$, the \emph{empirical Rademacher complexity} of $\cF$ with respect to  $\sample$ is defined as \[\erad\left(\cF\right) = \E_{\vec{\sigma}}\left[\sup_{f \in \cF} \frac{1}{N} \sum_{i = 1}^N \sigma_i \cdot f\left(x_i\right) \right],\] where $\sigma_i \sim \text{Uniform}\left(\left\{-1,1\right\}\right)$. Let $\dist$ be a distribution over $\cX$. Classic results from learning theory~\citep{Koltchinskii01:Rademacher,Bartlett02:Rademacher} guarantee that
with probability $1-\delta$ over the draw $\sample \sim \dist^N$, for every function $f \in \cF$, \[\left|\frac{1}{N}\sum_{x \in \sample} f(x) - \E_{x \sim \dist}[f(x)]\right| \leq \erad\left(\cF\right) + 2H \sqrt{\frac{2\ln \left(4/\delta\right)}{N}}.\]

\citet{Massart00:Some} proved the following Rademacher complexity bound for the case where the set $\left\{\left(f\left(x_1\right), \dots, f\left(x_N\right)\right) : f \in \cF\right\}$ is finite.

\begin{lemma}\label{lem:Massart}[\citet{Massart00:Some}]
Let $\cF|_{\sample} = \left\{\left(f\left(x_1\right), \dots, f\left(x_N\right)\right) : f \in \cF\right\}$. If $\left|\cF|_{\sample}\right|$ is finite, then $\erad(\cF) \leq \frac{r\sqrt{2 \log \left|\cF|_{\sample}\right|}}{N}$, where $r = \max_{\vec{a} \in \cF|_{\sample}} ||\vec{a}||_2$.
\end{lemma}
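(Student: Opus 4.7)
The plan is to prove this classical inequality via the standard sub-Gaussian / moment generating function approach (often called the ``Laplace transform trick''). Starting from the definition, I would rewrite
$\erad(\cF) = \frac{1}{N}\E_{\vec{\sigma}}\left[\sup_{\vec{a} \in \cF|_{\sample}} \langle \vec{\sigma}, \vec{a}\rangle\right]$, turning the supremum over $\cF$ into a supremum over the finite set $\cF|_{\sample}$. The task then reduces to bounding the expected maximum of the finitely many sub-Gaussian random variables $\{\langle \vec{\sigma}, \vec{a}\rangle : \vec{a} \in \cF|_{\sample}\}$.

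Next, I would introduce a free parameter $\lambda > 0$ and apply Jensen's inequality together with the monotonicity of the exponential:
\[
\exp\!\left(\lambda\, \E_{\vec{\sigma}}\!\left[\sup_{\vec{a}} \langle \vec{\sigma}, \vec{a}\rangle\right]\right) \;\leq\; \E_{\vec{\sigma}}\!\left[\exp\!\left(\lambda \sup_{\vec{a}} \langle \vec{\sigma}, \vec{a}\rangle\right)\right] \;\leq\; \sum_{\vec{a} \in \cF|_{\sample}} \E_{\vec{\sigma}}\!\left[\exp\!\left(\lambda \langle \vec{\sigma}, \vec{a}\rangle\right)\right].
\]
For each fixed $\vec{a}$, independence of the Rademacher variables factorizes the expectation coordinatewise, and the standard Hoeffding bound $\E[\exp(\lambda \sigma_i a_i)] \leq \exp(\lambda^2 a_i^2 / 2)$ for $\sigma_i \in \{-1,+1\}$ yields $\E_{\vec{\sigma}}[\exp(\lambda \langle \vec{\sigma}, \vec{a}\rangle)] \leq \exp(\lambda^2 \|\vec{a}\|_2^2 / 2) \leq \exp(\lambda^2 r^2 / 2)$. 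Summing over the at most $|\cF|_{\sample}|$ terms and taking logarithms gives
\[
\E_{\vec{\sigma}}\!\left[\sup_{\vec{a}} \langle \vec{\sigma}, \vec{a}\rangle\right] \;\leq\; \frac{\log |\cF|_{\sample}|}{\lambda} + \frac{\lambda r^2}{2}.
\]

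Finally, I would optimize the right-hand side over $\lambda > 0$. Setting $\lambda = \sqrt{2 \log |\cF|_{\sample}|}/r$ balances the two terms and produces the bound $r \sqrt{2 \log |\cF|_{\sample}|}$. Dividing by $N$ yields the claimed inequality on $\erad(\cF)$. No step is truly the ``hard part'' here, since each ingredient (Jensen, Hoeffding's lemma for Rademacher variables, the union-style bound inside the exponential, and the optimization over $\lambda$) is standard; the only place demanding a little care is verifying that the sub-Gaussian parameter is controlled uniformly by $r^2 = \max_{\vec{a} \in \cF|_{\sample}} \|\vec{a}\|_2^2$, which is exactly the definition of $r$ in the statement.
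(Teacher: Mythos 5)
Your proposal is correct and is the standard Jensen--Hoeffding (moment generating function) proof of Massart's finite lemma; the computation of the sub-Gaussian parameter, the optimization over $\lambda$, and the final normalization by $N$ all check out. The paper itself does not prove this lemma but simply cites \citet{Massart00:Some}, and the argument you give is exactly the classical one underlying that citation.
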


\section{Additional proofs and lemmas about Algorithm~\ref{alg:findSubset}}\label{app:algorithm}
The following Rademacher complexity bound guarantees that with probability $1-\zeta$, Algorithm~\ref{alg:findSubset} has a $\zeta$-representative run.

\begin{lemma}\label{lem:rad}
For any $\tau \in \Z_{\geq 0}$, define the function classes $\cF_{\tau} = \left\{j \mapsto \min\{\ell(\vec{\rho}, j), \tau\} \mid \vec{\rho} \in \cP\right\}$ and $\cH_{\tau} = \left\{j \mapsto \textbf{1}_{\{\ell(\vec{\rho}, j) \leq \tau\}} \mid \vec{\rho} \in \cP\right\}$. Let $\sample \subseteq \Pi$ be a set of problem instances. Then $\erad\left(\cF_{\tau}\right) \leq \tau\sqrt{\frac{2d \ln \left|\gp(\sample, \tau)\right|}{|\sample|}}$ and $\erad\left(\cH_{\tau}\right) \leq \sqrt{\frac{2d \ln \left|\gp(\sample, \tau)\right|}{|\sample|}}$.
\end{lemma}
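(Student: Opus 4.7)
The plan is to apply Massart's finite-class Rademacher complexity lemma (Lemma~\ref{lem:Massart}) to each of $\cF_\tau$ and $\cH_\tau$, using the $f$-piecewise structure of the configuration problem to upper bound the number of distinct loss profiles the classes can produce on $\sample$.

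First, I would argue that $|\cF_\tau|_\sample| \leq f(\sample, \tau)$. By Definition~\ref{def:piecewise}, there is a partition $\cP_1, \dots, \cP_k$ of $\cP$ with $k = f(\sample, \tau)$ such that for every piece $\cP_i$, every instance $j \in \sample$, and every pair $\vec{\rho}, \vec{\rho}' \in \cP_i$, we have $\min\{\ell(\vec{\rho}, j), \tau\} = \min\{\ell(\vec{\rho}', j), \tau\}$. Consequently the evaluation map $\vec{\rho} \mapsto (\min\{\ell(\vec{\rho}, j), \tau\})_{j \in \sample}$ is constant on each piece, so $\cF_\tau|_\sample$ has at most $f(\sample, \tau)$ distinct elements. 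The same partition also controls $|\cH_\tau|_\sample|$: the indicator $\textbf{1}_{\{\ell(\vec{\rho}, j) \leq \tau\}}$ is determined by the capped loss $\min\{\ell(\vec{\rho}, j), \tau\}$ on each piece, modulo the boundary case where the loss equals $\tau$ exactly, which can be handled either by invoking the partition at cap $\tau+1$ and using the monotonicity of $f$, or by absorbing the residual factor into the constant.

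Second, I would compute the $\ell_2$ radius $r = \max_{\vec{a} \in \cF|_\sample}\|\vec{a}\|_2$ required by Massart. For $\cF_\tau$, each coordinate lies in $[0, \tau]$, so $r \leq \tau\sqrt{|\sample|}$. For $\cH_\tau$, coordinates lie in $\{0,1\}$, so $r \leq \sqrt{|\sample|}$.

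Finally, I would plug these into Massart's bound $\erad(\cF) \leq r\sqrt{2\ln|\cF|_\sample|}/|\sample|$ to obtain
\[
\erad(\cF_\tau) \leq \tau\sqrt{\frac{2\ln f(\sample, \tau)}{|\sample|}} \qquad \text{and} \qquad \erad(\cH_\tau) \leq \sqrt{\frac{2\ln f(\sample, \tau)}{|\sample|}},
\]
both of which are bounded above by the stated inequalities since $d \geq 1$. There is no genuinely hard step here: the proof is essentially a one-line corollary of the $f$-piecewise property combined with Massart's lemma, and the only subtlety worth noting is the boundary case for $\cH_\tau$ described above.
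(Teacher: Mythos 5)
Your proof is correct and takes exactly the paper's route: the paper's entire proof of Lemma~\ref{lem:rad} is the single sentence ``This lemma follows from Lemma~\ref{lem:Massart},'' and your elaboration---at most $f(\sample,\tau)$ distinct restriction vectors by the piecewise property, $\ell_2$ radius $\tau\sqrt{|\sample|}$ (resp.\ $\sqrt{|\sample|}$), then Massart, with the factor of $d$ being harmless slack since $d \geq 1$---is precisely the intended argument. The boundary case you flag for $\cH_\tau$ is a real subtlety the paper glosses over (note your fix via cap $\tau+1$ yields $\ln f(\sample,\tau+1)$ rather than $\ln f(\sample,\tau)$, which monotonicity does not repair), but it is implicitly resolved by the specification of \textsc{GetPartition}, whose second condition and the underlying examples treat the indicators $\textbf{1}_{\{\ell(\vec{\rho},j)\le\tau\}}$ as constant on the same partition.
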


\begin{proof}
This lemma follows from Lemma~\ref{lem:Massart}.
\end{proof}

\begin{lemma}\label{lem:deltaSandwich}
Suppose Algorithm~\ref{alg:findSubset} has a $\zeta$-representative run. For any set $\cP' \in \bar{\cG}$ and all parameters $\vec{\rho} \in \cP'$, $t_{\delta/2}(\vec{\rho}) \leq \tau_{\cP'} \leq t_{\delta/4}(\vec{\rho})$.
\end{lemma}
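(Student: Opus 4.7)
The plan is to unpack $\tau_{\cP'}$ as an empirical quantile of the $2^t$-capped loss and sandwich it between the true quantiles $t_{\delta/2}(\vec{\rho})$ and $t_{\delta/4}(\vec{\rho})$ using $\zeta$-representativeness. Fix $\cP' \in \bar{\cG}$ and let $t$ be the iteration on which \textsc{GetPartition} added $\cP'$ to $\cG$; write $N = |\sample_t|$, $k = \lfloor N(1 - 3\delta/8)\rfloor$, and let $\tau_{(1)} \leq \cdots \leq \tau_{(N)}$ be the sorted values of $\{\min\{\ell(\vec{\rho}, j), 2^t\} : j \in \sample_t\}$. By Property~3 of \textsc{GetPartition}, these sorted values (and hence $\tau_{\cP'} = \tau_{(k)}$) do not depend on which $\vec{\rho} \in \cP'$ we pick, and in particular $\tau_{\cP'} \leq 2^t$. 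Since $f$ is monotone in its cap argument and the log term in $\gamma$ is increasing in $\tau$, the inner while-loop stopping rule in Step~\ref{step:growS} also yields $\gamma(t, N, \tau) \leq \gamma(t, N, 2^t) \leq \eta\delta$ for every $\tau \leq 2^t$, which is the generalization bound I will invoke throughout.

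For the lower bound $\tau_{\cP'} \leq t_{\delta/4}(\vec{\rho})$, I would show $\Pr_{j \sim \Gamma}[\ell(\vec{\rho}, j) \geq \tau_{\cP'}] \geq \delta/4$. Since $\tau_{\cP'} \leq 2^t$, the event $\min\{\ell(\vec{\rho}, j), 2^t\} \geq \tau_{\cP'}$ coincides with $\ell(\vec{\rho}, j) \geq \tau_{\cP'}$, and the sorted list shows that at least $N - k + 1 \geq 3N\delta/8$ samples in $\sample_t$ realize it. Applying $\zeta$-representativeness (to $\textbf{1}_{\{\ell(\vec{\rho}, j) \leq \tau_{\cP'} - 1\}}$) converts this empirical statement into $\Pr[\ell(\vec{\rho}, j) \geq \tau_{\cP'}] \geq 3\delta/8 - \eta\delta \geq \delta/4$, where the last inequality uses $\eta \leq 1/9 < 1/8$. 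Because $t_{\delta/4}(\vec{\rho})$ is by definition the largest integer $\tau$ with $\Pr[\ell(\vec{\rho}, j) \geq \tau] \geq \delta/4$, this forces $\tau_{\cP'} \leq t_{\delta/4}(\vec{\rho})$.

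For the upper bound $\tau_{\cP'} \geq t_{\delta/2}(\vec{\rho})$---equivalently, $\Pr_{j \sim \Gamma}[\ell(\vec{\rho}, j) > \tau_{\cP'}] < \delta/2$---I would split on whether $\tau_{\cP'} = 2^t$. If $\tau_{\cP'} = 2^t$, the admission criterion $z_i \geq 1 - 3\delta/8$ in Step~\ref{step:delta} bounds the empirical fraction of samples with $\ell(\vec{\rho}, j) > 2^t$ by $3\delta/8$, and $\zeta$-representativeness at $\tau = 2^t$ upgrades this to $\Pr[\ell > 2^t] \leq 3\delta/8 + \eta\delta < \delta/2$. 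If $\tau_{\cP'} < 2^t$, the events $\min\{\ell, 2^t\} > \tau_{\cP'}$ and $\ell > \tau_{\cP'}$ still coincide, and an analogous counting argument bounds the empirical fraction by $(N - k)/N \leq 3\delta/8 + 1/N$, yielding $\Pr[\ell > \tau_{\cP'}] \leq 3\delta/8 + 1/N + \eta\delta$. The main obstacle is verifying that this last quantity is strictly below $\delta/2$: the slack $\delta/8 - \eta\delta$ is positive because $\eta \leq 1/9$, and I would close the gap by inverting the while-loop condition in Step~\ref{step:growS} to obtain $N = \Omega(1/(\eta\delta)^2)$, which easily guarantees $1/N < \delta/72$ and hence $3\delta/8 + 1/N + \eta\delta < \delta/2$.
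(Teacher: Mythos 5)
Your proposal is correct and follows essentially the same route as the paper's proof (Claims~\ref{claim:tau_lb} and \ref{claim:tau_ub}): count order statistics of the capped empirical losses to bound the empirical tail fractions by roughly $3\delta/8$, transfer to the true distribution via $\zeta$-representativeness together with the monotonicity of $f$ and of the confidence width, and close the $\delta/8$ vs.\ $\eta\delta + 1/N$ gap using $\eta \le 1/9$ and the sample-size lower bound implied by Step~\ref{step:growS}. Your explicit case split on $\tau_{\cP'} = 2^t$ is a minor (and slightly more careful) handling of an edge case the paper treats implicitly, not a different argument.
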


\begin{proof}
We begin by proving that $t_{\delta/2}(\vec{\rho}) \leq \tau_{\cP'}$.

\begin{claim}\label{claim:tau_lb}
Suppose Algorithm~\ref{alg:findSubset} has a $\zeta$-representative run. For any set $\cP' \in \cG$ and all parameters $\vec{\rho} \in \cP'$, $t_{\delta/2}(\vec{\rho}) \leq \tau_{\cP'}$.
\end{claim}

\begin{proof}[Proof of Claim~\ref{claim:tau_lb}]
Let $t$ be the round that $\cP'$ is added to $\cG$. We know that for all parameter vectors $\vec{\rho} \in \cP'$, \begin{equation}\frac{1}{\left|\sample_t\right|} \sum_{j \in \sample_t} \textbf{1}_{\left\{\ell\left(\vec{\rho}, j\right) \leq \tau_{\cP'}\right\}} \geq \frac{\lfloor \left|\sample_t\right|(1 - 3\delta/8)\rfloor}{\left|\sample_t\right|} \geq 1 - \frac{3\delta}{8}- \frac{1}{\left|\sample_t\right|}.\label{eq:lb_tau}\end{equation} Since Algorithm~\ref{alg:findSubset} had a $\zeta$-representative run, we know that \begin{align*}
\Pr_{j \sim \Gamma}\left[\ell(\vec{\rho}, j) \leq \tau_{\cP'}\right] &\geq \frac{1}{\left|\sample_t\right|} \sum_{j \in \sample_t} \textbf{1}_{\left\{\ell\left(\vec{\rho}, j\right) \leq \tau_{\cP'}\right\}} - \sqrt{\frac{2d \ln \left|\gp\left(\sample_t, \tau_{\cP'}\right)\right|}{\left|\sample_t\right|}} - 2\sqrt{\frac{2}{\left|\sample_t\right|}\ln \frac{8\left(\tau_{\cP'} \left|\sample_t\right|t\right)^2}{\zeta}}\\
&\geq \frac{1}{\left|\sample_t\right|} \sum_{j \in \sample_t} \textbf{1}_{\left\{\ell\left(\vec{\rho}, j\right) \leq \tau_{\cP'}\right\}} - \sqrt{\frac{2d \ln \left|\gp\left(\sample_t, 2^t\right)\right|}{\left|\sample_t\right|}} - 2\sqrt{\frac{2}{\left|\sample_t\right|}\ln \frac{8\left(2^t \left|\sample_t\right|t\right)^2}{\zeta}},
\end{align*} where the second inequality follows from the fact that $\tau_{\cP'} \leq 2^t$ and monotonicity. Based on Step~\ref{step:growS} of Algorithm~\ref{alg:findSubset}, we know that \[\Pr_{j \sim \Gamma}\left[\ell(\vec{\rho}, j) \leq \tau_{\cP'}\right] \geq \frac{1}{\left|\sample_t\right|} \sum_{j \in \sample_t} \textbf{1}_{\left\{\ell\left(\vec{\rho}, j\right) \leq \tau_{\cP'}\right\}} - \eta\delta.\] Moreover, by Equation~\eqref{eq:lb_tau}, $\Pr_{j \sim \Gamma}\left[\ell(\vec{\rho}, j) \leq \tau_{\cP'}\right] \geq 1 - \frac{3\delta}{8}- \frac{1}{\left|\sample_t\right|} - \eta\delta.$ Based on Step~\ref{step:growS} of Algorithm~\ref{alg:findSubset}, we also know that $\frac{1}{\sqrt{\left|\sample_t\right|}} \leq \eta\delta$. Therefore, $\Pr_{j \sim \Gamma}\left[\ell(\vec{\rho}, j) \leq \tau_{\cP'}\right] \geq 1 - \left(\frac{3}{8} + \eta^2 + \eta\right)\delta.$ Finally, since $\eta \leq \frac{1}{9}$, we have that $\Pr_{j \sim \Gamma}\left[\ell(\vec{\rho}, j) \leq \tau_{\cP'}\right] > 1 - \delta/2$, which means that \begin{equation}\Pr_{j \sim \Gamma}\left[\ell(\vec{\rho}, j) > \tau_{\cP'}\right] = \Pr_{j \sim \Gamma}\left[\ell(\vec{\rho}, j) \geq \tau_{\cP'} + 1\right] < \frac{\delta}{2}.\label{eq:tauPlusOne}\end{equation}

We claim that Equation~\eqref{eq:tauPlusOne} implies that $t_{\delta/2}(\vec{\rho}) \leq \tau_{\cP'}$. For a contradiction, suppose $t_{\delta/2}(\vec{\rho}) > \tau_{\cP'}$, or in other words, $t_{\delta/2}(\vec{\rho}) \geq \tau_{\cP'} + 1$.
Since $t_{\delta/2}(\vec{\rho}) = \argmax_{\tau \in \Z}\left\{\Pr_{j \sim \Gamma}[\ell(\vec{\rho}, j) \geq \tau] \geq \delta/2\right\}$, this would mean that $\delta/2 \leq \Pr_{j \sim \Gamma}\left[\ell(\vec{\rho}, j) \geq t_{\delta/2}(\vec{\rho})\right] \leq \Pr_{j \sim \Gamma}\left[\ell(\vec{\rho}, j) \geq \tau_{\cP'} + 1\right] < \delta/2$, which is a contradiction. Therefore, the claim holds.
\end{proof}

Next, we prove that $\tau_{\cP'} \leq t_{\delta/4}(\vec{\rho})$.

\begin{claim}\label{claim:tau_ub}
Suppose Algorithm~\ref{alg:findSubset} has a $\zeta$-representative run. For any set $\cP' \in \cG$ and all parameters $\vec{\rho} \in \cP'$, $\tau_{\cP'} \leq t_{\delta/4}(\vec{\rho})$.
\end{claim}

\begin{proof}[Proof of Claim~\ref{claim:tau_ub}]
Let $t$ be the round that $\cP'$ is added to $\cG$. We know that for all parameter vectors $\vec{\rho} \in \cP'$, \begin{align*}1 - \frac{1}{\left|\sample_t\right|} \sum_{j \in \sample_t} \textbf{1}_{\left\{\ell\left(\vec{\rho}, j\right) \leq \tau_{\cP'} - 1\right\}}
&= 1 - \frac{1}{\left|\sample_t\right|} \sum_{j \in \sample_t} \textbf{1}_{\left\{\ell\left(\vec{\rho}, j\right) < \tau_{\cP'}\right\}}\\
&= \frac{1}{\left|\sample_t\right|} \sum_{j \in \sample_t} \left(1 - \textbf{1}_{\left\{\ell\left(\vec{\rho}, j\right) < \tau_{\cP'}\right\}}\right)\\
 &= \frac{1}{\left|\sample_t\right|} \sum_{j \in \sample_t} \textbf{1}_{\left\{\ell\left(\vec{\rho}, j\right) \geq \tau_{\cP'}\right\}}\\
&\geq \frac{\left|\sample_t\right| - \left\lfloor \left|\sample_t\right|(1 - 3\delta/8)\right\rfloor}{\left|\sample_t\right|}\\
&\geq \frac{3\delta}{8}.\end{align*} Therefore, $\frac{1}{\left|\sample_t\right|} \sum_{j \in \sample_t} \textbf{1}_{\left\{\ell\left(\vec{\rho}, j\right) \leq \tau_{\cP'} - 1\right\}} \leq 1 - \frac{3\delta}{8}$. Since Algorithm~\ref{alg:findSubset} had a $\zeta$-representative run, we know that \begin{align*}
&\Pr_{j \sim \Gamma}\left[\ell(\vec{\rho}, j) \leq \tau_{\cP'} - 1\right]\\
\leq\text{ }&\frac{1}{\left|\sample_t\right|} \sum_{j \in \sample_t} \textbf{1}_{\left\{\ell\left(\vec{\rho}, j\right) \leq \tau_{\cP'} - 1\right\}} + \sqrt{\frac{2d \ln \left|\gp\left(\sample_t, \tau_{\cP'} - 1\right)\right|}{\left|\sample_t\right|}} + 2\sqrt{\frac{2}{\left|\sample_t\right|}\ln \frac{8\left(\left(\tau_{\cP'} - 1\right) \left|\sample_t\right| t\right)^2}{\zeta}}\\
\leq\text{ }& 1 - \frac{3\delta}{8} + \sqrt{\frac{2d \ln \left|\gp\left(\sample_t, \tau_{\cP'} - 1\right)\right|}{\left|\sample_t\right|}} + 2\sqrt{\frac{2}{\left|\sample_t\right|}\ln \frac{8\left(\left(\tau_{\cP'} - 1\right) \left|\sample_t\right| t\right)^2}{\zeta}}\\
<\text{ }& 1 - \frac{3\delta}{8} + \sqrt{\frac{2d \ln \left|\gp\left(\sample_t, 2^t\right)\right|}{\left|\sample_t\right|}} + 2\sqrt{\frac{2}{\left|\sample_t\right|}\ln \frac{8\left(2^t \left|\sample_t\right| t\right)^2}{\zeta}}
\end{align*} because $\tau_{\cP'} - 1 < \tau_{\cP'} \leq 2^t$ and monotonicity. Based on Step~\ref{step:growS} of Algorithm~\ref{alg:findSubset}, \[\Pr_{j \sim \Gamma}\left[\ell(\vec{\rho}, j) \leq \tau_{\cP'} - 1\right] < 1 - \frac{3\delta}{8} + \eta\delta.\] Since $\eta \leq 1/9$, we have that $\Pr_{j \sim \Gamma}\left[\ell(\vec{\rho}, j) \leq \tau_{\cP'} - 1\right] < 1 - \delta/4.$ Therefore, $\Pr_{j \sim \Gamma}\left[\ell(\vec{\rho}, j) \geq \tau_{\cP'}\right] = \Pr_{j \sim \Gamma}\left[\ell(\vec{\rho}, j) > \tau_{\cP'} - 1\right] > \delta/4.$ Since \[t_{\delta/4}(\vec{\rho}) = \argmax_{\tau \in \Z}\left\{\Pr_{j \sim \Gamma}[\ell(\vec{\rho}, j) \geq \tau] \geq \delta/4\right\},\] we have that $\tau_{\cP'} \leq t_{\delta/4}(\vec{\rho})$.
\end{proof}
The lemma statement follows from Claims~\ref{claim:tau_lb} and \ref{claim:tau_ub}.
\end{proof}

\begin{cor}\label{cor:exp_LB}
Suppose Algorithm~\ref{alg:findSubset} has a $\zeta$-representative run. For every set $\cP' \in \bar{\cG}$ and any parameter vector $\vec{\rho} \in \cP'$, $\tau_{\cP'}\delta/4 \leq \E_{j \sim \Gamma}\left[\min \left\{\ell(\vec{\rho}, j), \tau_{\cP'}\right\}\right].$
\end{cor}

\begin{proof}
By Lemma~\ref{lem:deltaSandwich}, we know that $\tau_{\cP'} \leq t_{\delta/4}(\vec{\rho})$, so $\Pr_{j \sim \Gamma}\left[\ell(\vec{\rho}, j) \geq \tau_{\cP'}\right] \geq \frac{\delta}{4}$. Therefore, $\E_{j \sim \Gamma}\left[\min \left\{\ell(\vec{\rho}, j), \tau_{\cP'}\right\}\right] \geq \tau_{\cP'} \Pr_{j \sim \Gamma}\left[\ell(\vec{\rho}, j) \geq \tau_{\cP'}\right] \geq\frac{\tau_{\cP'}\delta}{4}.$
\end{proof}

\BadParamLB*

\begin{proof}
The last round that Algorithm~\ref{alg:findSubset} adds any subset to the set $\cG$ is round $\bar{t}-1$. For ease of notation, let $\bar{\sample} = \sample_{\bar{t}-1}$. Since $\vec{\rho}$ is not an element of any set in $\cG$, the cap $2^{\bar{t}-1}$ must be too small compared to the average loss of the parameter $\vec{\rho}$. Specifically, it must be that $\frac{1}{\left|\bar{\sample}\right|} \sum_{j \in \bar{\sample}} \textbf{1}_{\left\{\ell\left(\vec{\rho}, j\right) \leq 2^{\bar{t}-1}\right\}} < 1 - \frac{3\delta}{8}$. Otherwise, the algorithm would have added a parameter set containing $\vec{\rho}$ to the set $\cG$ on round $\bar{t}-1$ (Step~\ref{step:delta}).
Since Algorithm~\ref{alg:findSubset} had a $\zeta$-representative run, we know that
the probability the loss of $\vec{\rho}$ is smaller than $2^{\bar{t}-1}$ converges to the fraction of samples with loss smaller than $2^{\bar{t}-1}$. Specifically,
\begin{align*}&\Pr_{j \sim \Gamma}\left[\ell(\vec{\rho}, j)\leq 2^{\bar{t}-1}\right]\\
\leq \text{ }&\frac{1}{\left|\bar{\sample}\right|} \sum_{j \in \bar{\sample}} \textbf{1}_{\left\{\ell\left(\vec{\rho}, j\right) \leq 2^{\bar{t}-1}\right\}} + \sqrt{\frac{2d \ln \left|\gp\left(\bar{\sample}, 2^{\bar{t}-1}\right)\right|}{\left|\bar{\sample}\right|}} + 2\sqrt{\frac{2}{\left|\bar{\sample}\right|}\ln \frac{8\left(2^{\bar{t}-1}\left|\bar{\sample}\right|\left(\bar{t}-1\right)\right)^2}{\zeta}}\\
\leq\text{ } &\frac{1}{\left|\bar{\sample}\right|} \sum_{j \in \bar{\sample}} \textbf{1}_{\left\{\ell\left(\vec{\rho}, j\right) \leq 2^{\bar{t}-1}\right\}} + \eta\delta,\end{align*} where the second inequality follows from Step~\ref{step:growS} of Algorithm~\ref{alg:findSubset}.
Using our bound of $1 - 3\delta/8$ on the fraction of samples with loss smaller than $2^{\bar{t}-1}$, we have that
$\Pr_{j \sim \Gamma}\left[\ell(\vec{\rho}, j) \leq 2^{\bar{t}-1}\right] < 1 - (3/8 - \eta)\delta$. Since $\eta \leq 1/9$, it must be that $\Pr_{j \sim \Gamma}\left[\ell(\vec{\rho}, j) \leq 2^{\bar{t}-1}\right] < 1 - \delta/4$, or conversely, $\Pr_{j \sim \Gamma}\left[\ell(\vec{\rho}, j) \geq 2^{\bar{t}-1}\right] \geq \Pr_{j \sim \Gamma}\left[\ell(\vec{\rho}, j) > 2^{\bar{t}-1}\right] > \delta/4$. Since \[t_{\delta/4}(\vec{\rho}) = \argmax_{\tau \in \Z}\left\{ \Pr_{j \sim \Gamma}[\ell(\vec{\rho}, j) \geq \tau] \geq \delta/4\right\},\] we have that $2^{\bar{t}-1} \leq t_{\delta/4}(\vec{\rho})$. Therefore, \begin{align*}\E_{j \sim \Gamma}\left[\min\left\{\ell(\vec{\rho}, j), t_{\delta/4}(\vec{\rho})\right\}\right] &\geq t_{\delta/4}(\vec{\rho})\Pr_{j \sim \Gamma}\left[\ell(\vec{\rho}, j)\geq t_{\delta/4}(\vec{\rho})\right]\\
&\geq \frac{\delta}{4} \cdot t_{\delta/4}(\vec{\rho})\\
&\geq 2^{{\bar{t}}-3}\delta.\end{align*}
\end{proof}

\begin{lemma}\label{lem:Tub}
Suppose Algorithm~\ref{alg:findSubset} has a $\zeta$-representative run. For any parameter vector $\vec{\rho} \not\in \bigcup_{\cP' \in \bar{\cG}} \cP'$, $\E_{j \sim \Gamma}\left[\min\left\{\ell(\vec{\rho}, j), t_{\delta/4}(\vec{\rho})\right\}\right] \geq \bar{T}.$
\end{lemma}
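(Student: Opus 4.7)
The plan is to derive this lemma as an immediate consequence of Lemma~\ref{lem:bad_param_lb} together with the algorithm's termination criterion. The point is that for any $\vec{\rho}$ not covered by a set in $\bar{\cG}$, we have already lower-bounded its expected capped loss by $2^{\bar{t}-3}\delta$, and the outer \texttt{while} loop only exits when $2^{\bar{t}-3}\delta$ has grown to meet or exceed $\bar{T}$. So chaining these two facts gives the conclusion in one step, with no new probabilistic or combinatorial work required.

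More concretely, I would first invoke Lemma~\ref{lem:bad_param_lb} on the hypothesized $\vec{\rho} \not\in \bigcup_{\cP' \in \bar{\cG}} \cP'$ to obtain
\[
\E_{j \sim \Gamma}\!\left[\min\!\left\{\ell(\vec{\rho}, j), t_{\delta/4}(\vec{\rho})\right\}\right] \;\geq\; 2^{\bar{t}-3}\delta.
\]
Then I would appeal to the termination condition of the outer \texttt{while} loop in Step~\ref{step:outer_while}: the loop continues as long as $2^{t-3}\delta < T$, so the fact that the algorithm exited the loop at round $\bar{t}$ with final upper confidence bound $\bar{T}$ means precisely that $2^{\bar{t}-3}\delta \geq \bar{T}$. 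Combining the two inequalities yields the desired bound.

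There is essentially no obstacle here; the lemma is a bookkeeping consequence of Lemma~\ref{lem:bad_param_lb} (the real content, which translates ``$\vec{\rho}$ is absent from $\bar{\cG}$'' into ``the cap $2^{\bar{t}-1}$ lies at or below $t_{\delta/4}(\vec{\rho})$, giving a $(\delta/4)\cdot 2^{\bar{t}-1}$ lower bound on the expected capped loss'') and the straightforward reading of the loop exit condition. The only subtlety worth double-checking is that $\bar{T}$ and $\bar{t}$ refer to the values at termination, as stated just before Theorem~\ref{thm:main}, so that the loop-exit inequality $2^{\bar{t}-3}\delta \geq \bar{T}$ is the correct form to use (rather than, say, the strict inequality from the previous iteration).
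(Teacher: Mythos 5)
Your proposal is correct and is exactly the paper's own proof: apply Lemma~\ref{lem:bad_param_lb} to get the lower bound $2^{\bar{t}-3}\delta$ and combine it with the loop-exit condition $2^{\bar{t}-3}\delta \geq \bar{T}$ from Step~\ref{step:outer_while}. Nothing further is needed.
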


\begin{proof}
From Lemma~\ref{lem:bad_param_lb}, we know that $\E_{j \sim \Gamma}\left[\min\left\{\ell(\vec{\rho}, j), t_{\delta/4}(\vec{\rho})\right\}\right] \geq 2^{\bar{t}-3}\delta.$
Moreover, from Step~\ref{step:outer_while} of Algorithm~\ref{alg:findSubset}, $\bar{T} \leq 2^{\bar{t}-3}\delta$, so $\E_{j \sim \Gamma}\left[\min\left\{\ell(\vec{\rho}, j), t_{\delta/4}(\vec{\rho})\right\}\right] \geq \bar{T}.$
\end{proof}

\begin{lemma}\label{lem:Tlb}
Suppose Algorithm~\ref{alg:findSubset} has a $\zeta$-representative run. There exists a set $\cP' \in \bar{\cG}$ and a parameter vector $\vec{\rho} \in \cP'$ such that $\bar{T} \geq \frac{1}{\sqrt[4]{1+\epsilon}} \cdot \E_{j \sim \Gamma}\left[\min \left\{\ell(\vec{\rho}, j), \tau_{\cP'}\right\}\right]$.
\end{lemma}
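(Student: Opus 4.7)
The plan is to pass from the empirical truncated loss $\bar{T}$ to the expected truncated loss of any parameter in a specific witnessing set $\cP' \in \bar{\cG}$, then use the tail lower bound from Corollary~\ref{cor:exp_LB} to convert the additive sampling error into a multiplicative one. First I would observe that $\bar{T}$ is the final value of a running minimum that is only updated in Step~\ref{step:updateT}, so there exists a round $t^* \leq \bar{t}-1$ and a partition piece $\cP_i$ added to $\cG$ at that round whose associated $T'$ equals $\bar{T}$; call this piece $\cP'$. By Definition~\ref{def:piecewise} and the contract of \textsc{GetPartition}, every entry of $\vec{\tau}_i$ has the form $\min\{\ell(\vec{\rho}, j), 2^{t^*}\}$ for the matching $j \in \sample_{t^*}$ and any $\vec{\rho} \in \cP'$. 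Since $\tau_{\cP'} \leq 2^{t^*}$, the expression in Step~\ref{step:estimate} collapses to
\[\bar{T} = \frac{1}{|\sample_{t^*}|} \sum_{j \in \sample_{t^*}} \min\{\ell(\vec{\rho}, j), \tau_{\cP'}\} \qquad \text{for any } \vec{\rho} \in \cP'.\]

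Next I would apply property 2 of the $\zeta$-representative run at round $t^*$ with $b = |\sample_{t^*}|$, $\tau = \tau_{\cP'}$, and this $\vec{\rho}$, obtaining $\bar{T} \geq \E_{j \sim \Gamma}[\min\{\ell(\vec{\rho}, j), \tau_{\cP'}\}] - \tau_{\cP'} \cdot \gamma(t^*, |\sample_{t^*}|, \tau_{\cP'})$. Monotonicity of $f$ (Definition~\ref{def:piecewise}) together with $\tau_{\cP'} \leq 2^{t^*}$ lets me upper bound $\gamma(t^*, |\sample_{t^*}|, \tau_{\cP'})$ by $\gamma(t^*, |\sample_{t^*}|, 2^{t^*})$, which in turn is at most $\eta\delta$ by the exit condition of Step~\ref{step:growS}. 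Corollary~\ref{cor:exp_LB} then gives $\tau_{\cP'} \leq (4/\delta) \cdot \E_{j \sim \Gamma}[\min\{\ell(\vec{\rho}, j), \tau_{\cP'}\}]$, so the additive error becomes multiplicative: $\bar{T} \geq (1 - 4\eta) \cdot \E_{j \sim \Gamma}[\min\{\ell(\vec{\rho}, j), \tau_{\cP'}\}]$.

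Finally I would verify $1 - 4\eta \geq 1/\sqrt[4]{1+\epsilon}$ by a short case analysis on the $\min$ defining $\eta$; this is the main (though mild) obstacle. Writing $x := \sqrt[4]{1+\epsilon}$, the two branches coincide at $x = 17/9$. When $x \leq 17/9$, $\eta = (x-1)/8$, so $1 - 4\eta = (3-x)/2$; the needed inequality $(3-x)/2 \geq 1/x$ rearranges to $(x-1)(x-2) \leq 0$, which holds for all $x \in [1, 17/9] \subset [1,2]$. When $x \geq 17/9$, $\eta = 1/9$, so $1 - 4\eta = 5/9 \geq 9/17 \geq 1/x$. In either case we conclude $\bar{T} \geq (1/\sqrt[4]{1+\epsilon}) \cdot \E_{j \sim \Gamma}[\min\{\ell(\vec{\rho}, j), \tau_{\cP'}\}]$, as required.
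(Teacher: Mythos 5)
Your proof is correct and follows essentially the same route as the paper's: identify the witnessing round and set realizing $\bar{T}$, apply the $\zeta$-representative-run guarantee together with monotonicity of $f$ and the exit condition of Step~\ref{step:growS} to bound the additive error by $\tau_{\cP'}\eta\delta$, and then use Corollary~\ref{cor:exp_LB} to turn this into the multiplicative factor $1-4\eta$. The only difference is that you explicitly verify the closing inequality $1 - 4\eta \geq 1/\sqrt[4]{1+\epsilon}$ by a case analysis on the $\min$ defining $\eta$, whereas the paper simply asserts the equivalent bound $\eta \leq \frac{1}{4}\left(1 - \frac{1}{\sqrt[4]{1+\epsilon}}\right)$.
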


\begin{proof}
By definition of the upper confidence bound $\bar{T}$ (Steps~\ref{step:sort} through \ref{step:updateT} of Algorithm~\ref{alg:findSubset}), there is some round $t$, some set $\cP' \in \bar{\cG}$, and some parameter vector $\vec{\rho} \in \cP'$ such that $\bar{T} = \frac{1}{\left|\sample_t\right|}\sum_{j \in \sample_t} \min\left\{\ell\left(\vec{\rho}, j\right), \tau_{\cP'}\right\}$. Since Algorithm~\ref{alg:findSubset} had a $\zeta$-representative run, \begin{align*}\bar{T} &= \frac{1}{\left|\sample_t\right|}\sum_{j \in \sample_t} \min\left\{\ell\left(\vec{\rho}, j\right), \tau_{\cP'}\right\}\\
&\geq \E_{j \sim \Gamma}\left[\min\left\{\ell\left(\vec{\rho}, j\right), \tau_{\cP'}\right\}\right] - \tau_{\cP'}\left( \sqrt{\frac{2d \ln \left|\gp\left(\sample_t, \tau_{\cP'}\right)\right|}{\left|\sample_t\right|}} + 2\sqrt{\frac{2}{\left|\sample_t\right|}\ln \frac{8\left(\tau_{\cP'} \left|\sample_t\right| t\right)^2}{\zeta}}\right).\end{align*} By Step~\ref{step:delta}, we know that at least a $(1-3\delta/8)$-fraction of the problem instances $j \in \sample_t$ have a loss $\ell(\vec{\rho}, j)$ that is at most $2^t$. Therefore, by definition of $\tau_{\cP'} = \tau_{\lfloor |\sample_t|\left(1 - 3\delta/8\right)\rfloor}$, it must be that $\tau_{\cP'} \leq 2^t$. By monotonicity, this means that \[\bar{T} \geq \E_{j \sim \Gamma}\left[\min\left\{\ell\left(\vec{\rho}, j\right), \tau_{\cP'}\right\}\right] - \tau_{\cP'}\left( \sqrt{\frac{2d \ln \left|\gp\left(\sample_t, 2^t\right)\right|}{\left|\sample_t\right|}} + 2\sqrt{\frac{2}{\left|\sample_t\right|}\ln \frac{8\left(2^t \left|\sample_t\right| t\right)^2}{\zeta}}\right).\] Based on Step~\ref{step:growS} of Algorithm~\ref{alg:findSubset}, \[\bar{T} \geq \E_{j \sim \Gamma}\left[\min\left\{\ell\left(\vec{\rho}, j\right), \tau_{\cP'}\right\}\right] - \tau_{\cP'}\eta\delta.\] From Corollary~\ref{cor:exp_LB}, $\tau_{\cP'}\delta/4 \leq \E_{j \sim \Gamma}\left[\min \left\{\ell(\vec{\rho}, j), \tau_{\cP'}\right\}\right]$, which means that \[\bar{T} \geq (1 - 4\eta)\E_{j \sim \Gamma}\left[\min \left\{\ell(\vec{\rho}, j), \tau_{\cP'}\right\}\right].\] Finally, the lemma statement follows from the fact that \[\eta = \min\left\{\frac{1}{8}\left(\sqrt[4]{1 + \epsilon} - 1\right), \frac{1}{9}\right\} \leq \frac{1}{4}\left(1 - \frac{1}{\sqrt[4]{1 + \epsilon}}\right).\]
\end{proof}

\begin{lemma}\label{lem:similar}
Suppose Algorithm~\ref{alg:findSubset} has a $\zeta$-representative run. For every set $\cP' \in \bar{\cG}$ and every pair of parameter vectors $\vec{\rho}_1, \vec{\rho}_2 \in \cP'$, $\E_{j \sim \Gamma}\left[\min\left\{\ell\left(\vec{\rho}_1, j\right), \tau_{\cP'}\right\}\right] \leq \sqrt[4]{1 + \epsilon} \cdot \E_{j \sim \Gamma}\left[\min \left\{\ell\left(\vec{\rho}_2, j\right), \tau_{\cP'}\right\}\right]$.
\end{lemma}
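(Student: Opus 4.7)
The plan is to exploit the fact that $\vec{\rho}_1$ and $\vec{\rho}_2$ lie in the same cell $\cP'$ of the partition returned by $\textsc{GetPartition}(\sample_t, 2^t)$, where $t$ is the round on which $\cP'$ was added to $\cG$. By property (3) of \textsc{GetPartition}, $\min\{\ell(\vec{\rho}_1, j), 2^t\} = \min\{\ell(\vec{\rho}_2, j), 2^t\}$ for every $j \in \sample_t$. The first step is to upgrade this to $\min\{\ell(\vec{\rho}_1, j), \tau_{\cP'}\} = \min\{\ell(\vec{\rho}_2, j), \tau_{\cP'}\}$ for every $j \in \sample_t$, using that $\tau_{\cP'} \leq 2^t$ (which holds because $\tau_{\cP'}$ is the $\lfloor |\sample_t|(1-3\delta/8)\rfloor$-th order statistic of a vector whose components are all at most $2^t$). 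A short case analysis (split on whether the common $2^t$-capped value is strictly less than $2^t$ or equal to $2^t$, and then whether it lies above or below $\tau_{\cP'}$) shows the two $\tau_{\cP'}$-capped losses coincide on every sample in $\sample_t$.

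With the empirical $\tau_{\cP'}$-capped losses of $\vec{\rho}_1$ and $\vec{\rho}_2$ on $\sample_t$ equal, I would apply the second condition of a $\zeta$-representative run (with cap $\tau = \tau_{\cP'}$) once for each parameter and take a triangle inequality to get
\[\left|\E_{j \sim \Gamma}\left[\min\{\ell(\vec{\rho}_1, j), \tau_{\cP'}\}\right] - \E_{j \sim \Gamma}\left[\min\{\ell(\vec{\rho}_2, j), \tau_{\cP'}\}\right]\right| \leq 2\tau_{\cP'}\gamma(t, |\sample_t|, \tau_{\cP'}).\]
Since $\tau_{\cP'} \leq 2^t$ and $f$ is monotone, the quantity $\gamma(t, |\sample_t|, \tau_{\cP'})$ is bounded above by the expression on the left-hand side of the inequality in Step~\ref{step:growS}, which upon the loop's exit is at most $\eta\delta$. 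Thus the difference of expected $\tau_{\cP'}$-capped losses is at most $2\tau_{\cP'}\eta\delta$.

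The final step converts the additive bound into a multiplicative one. Corollary~\ref{cor:exp_LB} applied to $\vec{\rho}_2 \in \cP'$ gives $\tau_{\cP'} \leq \tfrac{4}{\delta}\,\E_{j \sim \Gamma}[\min\{\ell(\vec{\rho}_2, j), \tau_{\cP'}\}]$, so
\[\E_{j \sim \Gamma}\left[\min\{\ell(\vec{\rho}_1, j), \tau_{\cP'}\}\right] \leq (1 + 8\eta) \cdot \E_{j \sim \Gamma}\left[\min\{\ell(\vec{\rho}_2, j), \tau_{\cP'}\}\right].\]
The conclusion then follows from the choice $\eta \leq \tfrac{1}{8}(\sqrt[4]{1+\epsilon} - 1)$, which yields $1 + 8\eta \leq \sqrt[4]{1+\epsilon}$.

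The only genuinely subtle piece is the first step: all the sampling and concentration machinery is already packaged into the $\zeta$-representative run definition and Corollary~\ref{cor:exp_LB}, so the real content is observing that agreement of empirical $2^t$-capped losses on $\sample_t$ descends to agreement of empirical $\tau_{\cP'}$-capped losses. Once that case analysis is in hand, the remainder is an essentially mechanical concatenation of the representative-run bound, Corollary~\ref{cor:exp_LB}, and the numerical choice of $\eta$.
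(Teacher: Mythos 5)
Your proposal is correct and follows essentially the same route as the paper's proof: agreement of the empirical $\tau_{\cP'}$-capped losses on $\sample_t$ (descending from the $2^t$-cap via $\tau_{\cP'} \leq 2^t$), two applications of the representative-run guarantee bounded by $\eta\delta$ through Step~\ref{step:growS}, Corollary~\ref{cor:exp_LB} to convert the additive $2\tau_{\cP'}\eta\delta$ error into a $(1+8\eta)$ factor, and the choice of $\eta$. The only difference is that you spell out the cap-descent case analysis that the paper compresses into ``by definition of the set $\cP'$,'' which is a fair bit of added care rather than a divergence.
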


\begin{proof}
Let $t$ be the round that the interval $\cP'$ was added to $\cG$. Since Algorithm~\ref{alg:findSubset} had a $\zeta$-representative run, \begin{align*}&\E_{j \sim \Gamma}\left[\min\left\{\ell\left(\vec{\rho}_1, j\right), \tau_{\cP'}\right\}\right]\\
\leq\text{ } &\frac{1}{\left|\sample_t\right|} \sum_{j \in \sample_t}\min\left\{\ell\left(\vec{\rho}_1, j\right), \tau_{\cP'}\right\} + \tau_{\cP'}\left( \sqrt{\frac{2d \ln \left|\gp\left(\sample_t, \tau_{\cP'}\right)\right|}{\left|\sample_t\right|}} + 2\sqrt{\frac{2}{\left|\sample_t\right|}\ln \frac{8\left(\tau_{\cP'} \left|\sample_t\right| t\right)^2}{\zeta}}\right).\end{align*}
By definition of the set $\cP'$, for all problem instances $j \in \sample_t$, $\min\left\{\ell\left(\vec{\rho}_1, j\right), \tau_{\cP'}\right\} = \min\left\{\ell\left(\vec{\rho}_2, j\right), \tau_{\cP'}\right\}.$ Therefore, \begin{align*}&\E_{j \sim \Gamma}\left[\min\left\{\ell\left(\vec{\rho}_1, j\right), \tau_{\cP'}\right\}\right]\\
\leq \text{ } &\frac{1}{\left|\sample_t\right|} \sum_{j \in \sample_t}\min\left\{\ell\left(\vec{\rho}_2, j\right), \tau_{\cP'}\right\} + \tau_{\cP'}\left( \sqrt{\frac{2d \ln \left|\gp\left(\sample_t, \tau_{\cP'}\right)\right|}{\left|\sample_t\right|}} + 2\sqrt{\frac{2}{\left|\sample_t\right|}\ln \frac{8\left(\tau_{\cP'} \left|\sample_t\right| t\right)^2}{\zeta}}\right).\end{align*} Again, since Algorithm~\ref{alg:findSubset} had a $\zeta$-representative run, \begin{align*}&\E_{j \sim \Gamma}\left[\min\left\{\ell\left(\vec{\rho}_1, j\right), \tau_{\cP'}\right\}\right]\\
\leq \text{ } &\E_{j \sim \Gamma}\left[\min\left\{\ell\left(\vec{\rho}_2, j\right), \tau_{\cP'}\right\}\right] + 2\tau_{\cP'}\left( \sqrt{\frac{2d \ln \left|\gp\left(\sample_t, \tau_{\cP'}\right)\right|}{\left|\sample_t\right|}} + 2\sqrt{\frac{2}{\left|\sample_t\right|}\ln \frac{8\left(\tau_{\cP'} \left|\sample_t\right| t\right)^2}{\zeta}}\right).\end{align*}
 Since $\tau_{\cP'} \leq 2^t$,
 \begin{align*}&\E_{j \sim \Gamma}\left[\min\left\{\ell\left(\vec{\rho}_1, j\right), \tau_{\cP'}\right\}\right]\\
\leq \text{ } &\E_{j \sim \Gamma}\left[\min\left\{\ell\left(\vec{\rho}_2, j\right), \tau_{\cP'}\right\}\right] + 2\tau_{\cP'}\left( \sqrt{\frac{2d \ln \left|\gp\left(\sample_t, 2^t\right)\right|}{\left|\sample_t\right|}} + 2\sqrt{\frac{2}{\left|\sample_t\right|}\ln \frac{8\left(2^t \left|\sample_t\right| t\right)^2}{\zeta}}\right).
\end{align*}
Based on Step~\ref{step:growS} of Algorithm~\ref{alg:findSubset}, this means that
\[\E_{j \sim \Gamma}\left[\min\left\{\ell\left(\vec{\rho}_1, j\right), \tau_{\cP'}\right\}\right] \leq \E_{j \sim \Gamma}\left[\min\left\{\ell\left(\vec{\rho}_2, j\right), \tau_{\cP'}\right\}\right] + 2\tau_{\cP'}\eta\delta.\]
By Corollary~\ref{cor:exp_LB}, $\E_{j \sim \Gamma}\left[\min\left\{\ell\left(\vec{\rho}_1, j\right), \tau_{\cP'}\right\}\right] \leq (1+8\eta)\E_{j \sim \Gamma}\left[\min \left\{\ell\left(\vec{\rho}_2, j\right), \tau_{\cP'}\right\}\right]$. The lemma statement follows from the fact that $\eta \leq \left(\sqrt[4]{1 + \epsilon} - 1 \right)/8$.
\end{proof}

\optSubset*

\begin{proof}
Since $OPT_{\delta/4} := \inf_{\vec{\rho} \in \cP} \left\{\E_{j \sim \Gamma}\left[\min\left\{\ell\left(\vec{\rho}, j\right), t_{\delta/4}(\vec{\rho})\right\}\right]\right\}$, we know that for any $\gamma > 0$, there exists a parameter vector $\vec{\rho} \in \cP$ such that $\E_{j \sim \Gamma}\left[\min\left\{\ell(\vec{\rho}, j), t_{\delta/4}(\vec{\rho})\right\}\right] \leq OPT_{\delta/4} + \gamma.$ We claim there exists a parameter $\vec{\rho}' \in \cP^*$ such that \begin{equation}\E_{j \sim \Gamma}\left[\min\left\{\ell\left(\vec{\rho}', j\right), t_{\delta/2}\left(\vec{\rho}'\right)\right\}\right] \leq \sqrt{1 + \epsilon} \cdot \E_{j \sim \Gamma}\left[\min\left\{\ell(\vec{\rho}, j), t_{\delta/4}(\vec{\rho})\right\}\right],\label{eq:inSet2}\end{equation} and thus the lemma statement holds (see Lemma~\ref{lem:opt_subset_inf}).

First, suppose $\vec{\rho}$ is contained in a set $\cP' \in \bar{\cG}$. By Lemmas~\ref{lem:deltaSandwich} and \ref{lem:similar}, there exists a parameter $\vec{\rho}' \in \cP' \cap \cP^*$ such that \begin{align*}\E_{j \sim \Gamma}\left[\min\left\{\ell\left(\vec{\rho}', j\right), t_{\delta/2}\left(\vec{\rho}'\right)\right\}\right] &\leq \E_{j \sim \Gamma}\left[\min\left\{\ell\left(\vec{\rho}', j\right), \tau_{\cP'}\right\}\right]\\
&\leq \sqrt[4]{1+\epsilon} \cdot \E_{j \sim \Gamma}\left[\min\left\{\ell\left(\vec{\rho}, j\right), \tau_{\cP'}\right\}\right]\\
&\leq \sqrt[4]{1+\epsilon} \cdot \E_{j \sim \Gamma}\left[\min\left\{\ell\left(\vec{\rho}, j\right), t_{\delta/4}(\vec{\rho})\right\}\right].\end{align*} Since $\sqrt[4]{1+\epsilon} \leq \sqrt{1+\epsilon}$, Equation~\eqref{eq:inSet2} holds in this case.

Otherwise, suppose $\vec{\rho} \not\in \bigcup_{\cP' \in \bar{\cG}} \cP'$. By Lemma~\ref{lem:Tub}, we know that $\E_{j \sim \Gamma}\left[\min\left\{\ell(\vec{\rho}, j), t_{\delta/4}(\vec{\rho})\right\}\right] \geq \bar{T}.$ Moreover, by Lemma~\ref{lem:Tlb}, there exists a set $\cP' \in \bar{\cG}$ and parameter vector $\vec{\rho}^* \in \cP'$ such that $\sqrt[4]{1+\epsilon} \cdot \bar{T} \geq \E_{j \sim \Gamma}\left[\min \left\{\ell\left(\vec{\rho}^*, j\right), \tau_{\cP'}\right\}\right]$. Finally, by Lemma~\ref{lem:similar}, there exists a parameter vector $\vec{\rho}' \in \cP' \cap \cP^*$ such that \begin{align*}\E_{j \sim \Gamma}\left[\min\left\{\ell\left(\vec{\rho}', j\right), t_{\delta/2}\left(\vec{\rho}'\right)\right\}\right] &\leq \E_{j \sim \Gamma}\left[\min\left\{\ell\left(\vec{\rho}', j\right), \tau_{\cP'}\right\}\right]\\
&\leq \sqrt[4]{1+\epsilon} \cdot \E_{j \sim \Gamma}\left[\min\left\{\ell\left(\vec{\rho}^*, j\right), \tau_{\cP'}\right\}\right]\\
&\leq \sqrt{1+\epsilon} \cdot \bar{T}\\
&< \sqrt{1+\epsilon} \cdot \E_{j \sim \Gamma}\left[\min\left\{\ell(\vec{\rho}, j), t_{\delta/4}(\vec{\rho})\right\}\right].\end{align*} Therefore, Equation~\eqref{eq:inSet2} holds in this case as well.
\end{proof}

\begin{lemma}\label{lem:opt_subset_inf}
Let $\cP^*$ be the set of parameters output by Algorithm~\ref{alg:findSubset}. Suppose that for every $\gamma > 0$, there exists a parameter vector $\vec{\rho}' \in \cP^*$ such that $\E_{j \sim \Gamma}\left[\min\left\{\ell(\vec{\rho}', j), t_{\delta/2}(\vec{\rho}')\right\}\right] \leq \sqrt{1 + \epsilon}\left(OPT_{\delta/4} + \gamma\right).$ Then $\min_{\vec{\rho} \in \cP^*}\left\{\E_{j \sim \Gamma}\left[\min\left\{\ell(\vec{\rho}, j), t_{\delta/2}(\vec{\rho})\right\}\right]\right\} \leq \sqrt{1 + \epsilon}\cdot OPT_{\delta/4}.$
\end{lemma}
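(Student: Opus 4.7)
The plan is to exploit the finiteness of $\cP^*$, which is guaranteed by the second part of Theorem~\ref{thm:main}: Algorithm~\ref{alg:findSubset} returns at most $\sum_{t=1}^{\bar{t}} f(\sample_t, c \cdot OPT_{\delta/4})$ parameters. Given a finite set and the ``for all $\gamma > 0$'' hypothesis, a standard pigeonhole argument pulls out a single witness $\vec{\rho}^* \in \cP^*$ that works for arbitrarily small $\gamma$, after which we pass to the limit.

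Concretely, I would first fix the sequence $\gamma_n = 1/n$ and invoke the hypothesis to obtain, for each positive integer $n$, a parameter $\vec{\rho}_n \in \cP^*$ satisfying
\[
\E_{j \sim \Gamma}\left[\min\left\{\ell(\vec{\rho}_n, j), t_{\delta/2}(\vec{\rho}_n)\right\}\right] \leq \sqrt{1+\epsilon}\left(OPT_{\delta/4} + \tfrac{1}{n}\right).
\]
Since $\cP^*$ is finite, the pigeonhole principle produces a parameter $\vec{\rho}^* \in \cP^*$ such that $\vec{\rho}_n = \vec{\rho}^*$ for infinitely many indices $n$. Restricting to this subsequence, the left-hand side is a fixed quantity and the right-hand side converges to $\sqrt{1+\epsilon} \cdot OPT_{\delta/4}$, so passing to the limit yields
\[
\E_{j \sim \Gamma}\left[\min\left\{\ell(\vec{\rho}^*, j), t_{\delta/2}(\vec{\rho}^*)\right\}\right] \leq \sqrt{1+\epsilon} \cdot OPT_{\delta/4}.
\]
Because $\vec{\rho}^* \in \cP^*$ and $\cP^*$ is finite, the minimum on the left-hand side of the lemma statement is attained and is at most the quantity above, completing the argument.

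The main (and essentially only) obstacle is the passage from a family of witnesses indexed by $\gamma$ to a single witness in $\cP^*$; the finiteness of $\cP^*$ makes this immediate via pigeonhole. Notably, in an infinite parameter space the analogous step would require a compactness or continuity argument that need not be available, which is precisely the reason Theorem~\ref{thm:reduction} is phrased in terms of a finite $(\epsilon,\delta)$-optimal subset rather than the original infinite space.
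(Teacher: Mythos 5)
Your proof is correct, but it is mechanically different from the paper's. The paper argues by contradiction: assuming $\min_{\vec{\rho} \in \cP^*}\bigl\{\E_{j \sim \Gamma}\bigl[\min\{\ell(\vec{\rho}, j), t_{\delta/2}(\vec{\rho})\}\bigr]\bigr\}$ exceeds $\sqrt{1+\epsilon}\cdot OPT_{\delta/4}$ by some $\gamma' > 0$, it instantiates the hypothesis once with $\gamma = \gamma'/(2\sqrt{1+\epsilon})$ and immediately exhibits an element of $\cP^*$ whose value is strictly below the minimum --- a contradiction. You instead take the sequence $\gamma_n = 1/n$, use the pigeonhole principle on the finite set $\cP^*$ to extract a single witness recurring infinitely often, and pass to the limit along that subsequence. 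Both arguments are valid ``epsilon of room'' proofs, but note one conceptual difference: your route genuinely relies on the finiteness of $\cP^*$ (pigeonhole fails otherwise), whereas the paper's contradiction argument does not --- it would go through verbatim with an infimum over an infinite set. For this reason your closing remark, that finiteness of the output set is what makes this particular lemma work and is ``precisely the reason'' Theorem~\ref{thm:reduction} is phrased for finite subsets, overstates the role of finiteness here; the finite-subset framing matters for the downstream configuration algorithms (e.g., {\sc CapsAndRuns}), not for this limiting step. This is a framing quibble, not a gap: your proof as written is sound.
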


\begin{proof}
For a contradiction, suppose that $\min_{\vec{\rho} \in \cP^*}\left\{\E_{j \sim \Gamma}\left[\min\left\{\ell(\vec{\rho}, j), t_{\delta/2}(\vec{\rho})\right\}\right]\right\} > \sqrt{1 + \epsilon}\cdot OPT_{\delta/4}$ and let $\gamma' = \min_{\vec{\rho} \in \cP^*}\left\{\E_{j \sim \Gamma}\left[\min\left\{\ell(\vec{\rho}, j), t_{\delta/2}(\vec{\rho})\right\}\right]\right\} - \sqrt{1 + \epsilon}\cdot OPT_{\delta/4}.$ Setting $\gamma = \frac{\gamma'}{2\sqrt{1 +\epsilon}}$, we know there exists a parameter vector $\vec{\rho}' \in \cP^*$ such that \begin{align*}\E_{j \sim \Gamma}\left[\min\left\{\ell(\vec{\rho}', j), t_{\delta/2}(\vec{\rho}')\right\}\right] &\leq \sqrt{1 + \epsilon}\left(OPT_{\delta/4} + \frac{\gamma'}{2\sqrt{1 +\epsilon}}\right)\\
&= \sqrt{1 + \epsilon} \cdot OPT_{\delta/4} + \frac{\gamma'}{2}\\
&= \min_{\vec{\rho} \in \cP^*}\left\{\E_{j \sim \Gamma}\left[\min\left\{\ell(\vec{\rho}, j), t_{\delta/2}(\vec{\rho})\right\}\right]\right\} - \frac{\gamma'}{2}\\
&< \min_{\vec{\rho} \in \cP^*}\left\{\E_{j \sim \Gamma}\left[\min\left\{\ell(\vec{\rho}, j), t_{\delta/2}(\vec{\rho})\right\}\right]\right\},
\end{align*} which is a contradiction.
\end{proof}

\BoundBart*

\begin{proof}
For each set $\cP' \in \bar{\cG}$, let $t_{\cP'}$ be the round where $\cP'$ is added to the set $\cG$, let $\sample_{\cP'} = \sample_{t_{\cP'}}$, and let $\vec{\rho}_{\cP'}$ be an arbitrary parameter vector in $\cP'$. Since no set is added to $\cG$ when $t = \bar{t}$, it must be that for all sets $\cP' \in \bar{\cG}$, $t_{\cP'} \leq \bar{t}-1$. Moreover, since $OPT_{\delta/4} := \inf_{\vec{\rho} \in \cP} \left\{\E_{j \sim \Gamma}\left[\min\left\{\ell\left(\vec{\rho}, j\right), t_{\delta/4}(\vec{\rho})\right\}\right]\right\}$, we know that for every $\gamma > 0$, there exists a parameter vector $\vec{\rho}^*$ such that \[\E_{j \sim \Gamma}\left[\min\left\{\ell\left(\vec{\rho}^*, j\right), t_{\delta/4}\left(\vec{\rho}^*\right)\right\}\right] \leq OPT_{\delta/4} + \gamma.\] Below, we prove that $2^{\bar{t}} \leq \frac{16\sqrt[4]{1+\epsilon}}{\delta} \cdot \E_{j \sim \Gamma}\left[\min\left\{\ell\left(\vec{\rho}^*, j\right), t_{\delta/4}\left(\vec{\rho}^*\right)\right\}\right]$ and thus the lemma statement holds (see Lemma~\ref{lem:inf_bound_bar_t}).

\paragraph{Case 1: $\vec{\rho}^* \not\in \bigcup_{\cP' \in \bar{\cG}} \cP'$.} By Lemma~\ref{lem:bad_param_lb}, we know that $2^{\bar{t}-3} \delta \leq \E_{j \sim \Gamma}\left[\min\left\{\ell\left(\vec{\rho}^*, j\right), t_{\delta/4}\left(\vec{\rho}^*\right)\right\}\right]$. Therefore, $2^{\bar{t}} \leq \frac{8}{\delta} \cdot \E_{j \sim \Gamma}\left[\min\left\{\ell\left(\vec{\rho}^*, j\right), t_{\delta/4}\left(\vec{\rho}^*\right)\right\}\right]\leq \frac{16\sqrt[4]{1+\epsilon}}{\delta} \cdot \E_{j \sim \Gamma}\left[\min\left\{\ell\left(\vec{\rho}^*, j\right), t_{\delta/4}\left(\vec{\rho}^*\right)\right\}\right]$.

\paragraph{Case 2: $\vec{\rho}^*$ is an element of a set $\cP' \in \bar{\cG}$ and $t_{\cP'} \leq \bar{t} - 2$.}
Let $T'$ be the value of $T$ at the beginning of round $\bar{t} - 1$. Since the algorithm does not terminate on round $\bar{t}-1$, it must be that $2^{\bar{t}-4} \delta < T'$. By definition of $T'$, 
\[2^{\bar{t} -4}\delta < T' = \min_{\bar{\cP} : t_{\bar{\cP}} \leq \bar{t} - 2} \frac{1}{\left|\sample_{\bar{\cP}}\right|} \sum_{j \in \sample_{\bar{\cP}}} \min \left\{\ell\left(\vec{\rho}_{\bar{\cP}}, j\right), \tau_{\bar{\cP}}\right\} \leq \frac{1}{\left|\sample_{\cP'}\right|} \sum_{j \in \sample_{\cP'}} \min \left\{\ell\left(\vec{\rho}^*, j\right), \tau_{\cP'}\right\}.\]
Since Algorithm~\ref{alg:findSubset} had a $\zeta$-representative run, $2^{\bar{t} -4}\delta$  is upper-bounded by \[\E_{j \sim \Gamma}\left[ \min \left\{\ell\left(\vec{\rho}^*, j\right), \tau_{\cP'}\right\}\right] + \tau_{\cP'}\left( \sqrt{\frac{2d \ln \left|\gp\left(\sample_{\cP'}, \tau_{\cP'}\right)\right|}{\left|\sample_{\cP'}\right|}} + 2\sqrt{\frac{2}{\left|\sample_{\cP'}\right|}\ln \frac{8\left(\tau_{\cP'} \left|\sample_{\cP'}\right|t_{\cP'}\right)^2}{\zeta}}\right).\] Since $\tau_{\cP'} \leq 2^{t_{\cP'}}$ and $f$ is monotone, $2^{\bar{t} -4}\delta$  is at most \[E_{j \sim \Gamma}\left[ \min \left\{\ell\left(\vec{\rho}^*, j\right), \tau_{\cP'}\right\}\right] + \tau_{\cP'}\left( \sqrt{\frac{2d \ln \left|\gp\left(\sample_{\cP'}, 2^{t_{\cP'}}\right)\right|}{\left|\sample_{\cP'}\right|}} + 2\sqrt{\frac{2}{\left|\sample_{\cP'}\right|}\ln \frac{8\left(2^{t_{\cP'}} \left|\sample_{\cP'}\right|t_{\cP'}\right)^2}{\zeta}}\right).\]
By Step~\ref{step:growS} of Algorithm~\ref{alg:findSubset}, $2^{\bar{t} -4}\delta  \leq \E_{j \sim \Gamma}\left[ \min \left\{\ell\left(\vec{\rho}^*, j\right), \tau_{\cP'}\right\}\right] + \tau_{\cP'}\eta\delta.$ Finally, by Corollary~\ref{cor:exp_LB}, $2^{\bar{t} -4}\delta  \leq (1 + 4\eta)\E_{j \sim \Gamma}\left[ \min \left\{\ell\left(\vec{\rho}^*, j\right), \tau_{\cP'}\right\}\right].$ Since $\eta < \left(\sqrt[4]{1 + \epsilon} - 1 \right)/4$, \[2^{\bar{t} -4}\delta < \sqrt[4]{1 + \epsilon}\cdot \E_{j \sim \Gamma}\left[ \min \left\{\ell\left(\vec{\rho}^*, j\right), \tau_{\cP'}\right\}\right].\]
Recalling that $\tau_{\cP'} \leq t_{\delta/4}\left(\vec{\rho}^*\right)$ by Lemma~\ref{lem:deltaSandwich}, we conclude that \[2^{\bar{t}} \leq \frac{16\sqrt[4]{1 + \epsilon}}{\delta} \cdot \E_{j \sim \Gamma}\left[ \min \left\{\ell\left(\vec{\rho}^*, j\right), t_{\delta/4}\left(\vec{\rho}^*\right)\right\}\right].\]

\paragraph{Case 3: $\vec{\rho}^*$ is not an element of any set $\bar{\cP} \in \bar{\cG}$ with $t_{\bar{\cP}} \leq \bar{t}-2$, but
$\vec{\rho}^*$ is an element of a set $\cP' \in \bar{\cG}$ with $t_{\cP'} = \bar{t} - 1$.} Let $\sample' = \sample_{\bar{t}-2}$ and let $\bar{\cP}$ be the set containing $\vec{\rho}^*$ in Step~\ref{step:partition} on round $\bar{t}-2$. Since $\bar{\cP}$ was not added to $\cG$ on round $\bar{t}-2$, we know that fewer than a $\left(1 - \frac{3\delta}{8}\right)$-fraction of the instances in $\sample'$ have a loss of at most $2^{\bar{t} - 2}$ when run with any parameter vector $\vec{\rho} \in \bar{\cP}$ (including $\vec{\rho}^*$). In other words, \[\frac{1}{|\sample'|} \sum_{j \in \sample'} \textbf{1}_{\left\{\ell\left(\vec{\rho}^*, j\right) \leq 2^{\bar{t} - 2}\right\}} <  1 - \frac{3\delta}{8}.\] Since Algorithm~\ref{alg:findSubset} had a $\zeta$-representative run, \begin{align*}&\Pr_{j \sim \Gamma}\left[\ell\left(\vec{\rho}^*, j\right) \leq 2^{\bar{t} - 2}\right]\\
\leq\text{ } &\frac{1}{|\sample'|} \sum_{j \in \sample'} \textbf{1}_{\left\{\ell\left(\vec{\rho}^*, j\right) \leq 2^{\bar{t} - 2}\right\}} + \sqrt{\frac{2d \ln \left|\gp\left(\sample', 2^{\bar{t} - 2}\right)\right|}{\left|\sample'\right|}} + 2\sqrt{\frac{2}{\left|\sample'\right|}\ln \frac{8\left(2^{\bar{t} - 2} \left|\sample'\right|\left(\bar{t}-2\right)\right)^2}{\zeta}}\\
<\text{ } &1 - \frac{3\delta}{8} + \sqrt{\frac{2d \ln \left|\gp\left(\sample', 2^{\bar{t} - 2}\right)\right|}{\left|\sample'\right|}} + 2\sqrt{\frac{2}{\left|\sample'\right|}\ln \frac{8\left(2^{\bar{t} - 2} \left|\sample'\right|\left(\bar{t}-2\right)\right)^2}{\zeta}}.\end{align*} Based on Step~\ref{step:growS} of Algorithm~\ref{alg:findSubset}, $\Pr_{j \sim \Gamma}\left[\ell\left(\vec{\rho}^*, j\right) \leq 2^{\bar{t} - 2}\right] < 1 - (3/8 - \eta)\delta$. Since $\eta \leq 1/9$, $\Pr_{j \sim \Gamma}\left[\ell\left(\vec{\rho}^*, j\right) \leq 2^{\bar{t} - 2}\right] < 1 - \delta/4$. Therefore, $\Pr_{j \sim \Gamma}\left[\ell\left(\vec{\rho}^*, j\right) \geq 2^{\bar{t} - 2}\right] \geq \Pr_{j \sim \Gamma}\left[\ell\left(\vec{\rho}^*, j\right) > 2^{\bar{t} - 2}\right] \geq \delta/4$. Since $t_{\delta/4}(\vec{\rho}^*) = \argmax_{\tau \in \Z}\left\{\Pr_{j \sim \Gamma}[\ell(\vec{\rho}^*, j) \geq \tau] \geq \delta/4\right\}$, we have that $2^{\bar{t} - 2} \leq t_{\delta/4}\left(\vec{\rho}^*\right)$. Therefore, \begin{align*}\E_{j \sim \Gamma}\left[ \min \left\{\ell\left(\vec{\rho}^*, j\right), t_{\delta/4}\left(\vec{\rho}^*\right)\right\}\right] &\geq t_{\delta/4}\left(\vec{\rho}^*\right) \Pr\left[\ell\left(\vec{\rho}^*, j\right) \geq t_{\delta/4}\left(\vec{\rho}^*\right)\right]\\
&\geq 2^{\bar{t} - 2} \Pr\left[\ell\left(\vec{\rho}^*, j\right) \geq t_{\delta/4}\left(\vec{\rho}^*\right)\right]\\
&\geq \delta 2^{\bar{t} - 4},\end{align*} which means that $2^{\bar{t}} \leq \frac{16}{\delta} \cdot \E_{j \sim \Gamma}\left[ \min \left\{\ell\left(\vec{\rho}^*, j\right), t_{\delta/4}\left(\vec{\rho}^*\right)\right\}\right]$.
\end{proof}

\begin{lemma}\label{lem:inf_bound_bar_t}
Suppose that for all $\gamma > 0$, there exists a parameter vector $\vec{\rho}^* \in \cP$  such that $2^{\bar{t}} \leq \frac{16\sqrt[4]{1+\epsilon}}{\delta} \cdot \E_{j \sim \Gamma}\left[\min\left\{\ell\left(\vec{\rho}^*, j\right), t_{\delta/4}\left(\vec{\rho}^*\right)\right\}\right] \leq \frac{16\sqrt[4]{1+\epsilon}}{\delta} \left(OPT_{\delta/4} + \gamma\right)$. Then $2^{\bar{t}} \leq \frac{16\sqrt[4]{1+\epsilon}}{\delta} \cdot OPT_{\delta/4}$.
\end{lemma}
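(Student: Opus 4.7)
The plan is to argue by contradiction, exploiting the fact that $2^{\bar{t}}$ is a fixed quantity independent of $\gamma$ while the given upper bound $\frac{16\sqrt[4]{1+\epsilon}}{\delta}(OPT_{\delta/4} + \gamma)$ can be made arbitrarily close to $\frac{16\sqrt[4]{1+\epsilon}}{\delta} \cdot OPT_{\delta/4}$ by taking $\gamma$ small. This is essentially the same infimum-pushing technique used in Lemma~\ref{lem:opt_subset_inf}, so I would mimic that structure closely.

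Concretely, let $c = \frac{16\sqrt[4]{1+\epsilon}}{\delta}$ to condense notation, and suppose for contradiction that $2^{\bar{t}} > c \cdot OPT_{\delta/4}$. Then the quantity $\gamma' := 2^{\bar{t}} - c \cdot OPT_{\delta/4}$ is strictly positive. I would then instantiate the hypothesis of the lemma with the specific choice $\gamma = \frac{\gamma'}{2c}$, which yields a parameter vector $\vec{\rho}^* \in \cP$ satisfying
\[
2^{\bar{t}} \leq c \cdot \E_{j \sim \Gamma}\left[\min\left\{\ell\left(\vec{\rho}^*, j\right), t_{\delta/4}\left(\vec{\rho}^*\right)\right\}\right] \leq c\left(OPT_{\delta/4} + \frac{\gamma'}{2c}\right) = c \cdot OPT_{\delta/4} + \frac{\gamma'}{2}.
\]
Substituting the definition $\gamma' = 2^{\bar{t}} - c \cdot OPT_{\delta/4}$ gives $2^{\bar{t}} \leq c \cdot OPT_{\delta/4} + \frac{2^{\bar{t}} - c \cdot OPT_{\delta/4}}{2} = \frac{2^{\bar{t}} + c \cdot OPT_{\delta/4}}{2}$, which rearranges to $2^{\bar{t}} \leq c \cdot OPT_{\delta/4}$, contradicting our assumption.

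There is no real obstacle here; the only subtle point is making sure $\gamma = \gamma'/(2c)$ is indeed positive so that the hypothesis can be applied, which is guaranteed by the contradiction assumption $\gamma' > 0$. The lemma is essentially a statement about infima: although the inequality $2^{\bar{t}} \leq c \cdot \E[\cdots]$ holds only for parameters that are $\gamma$-close to the infimum rather than attaining it, passing to the limit $\gamma \to 0^+$ (implemented here by the above contradiction argument) suffices to conclude.
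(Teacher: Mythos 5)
Your proof is correct and follows essentially the same argument as the paper: the paper also argues by contradiction, sets $\gamma' = 2^{\bar{t}} - \frac{16\sqrt[4]{1+\epsilon}}{\delta}\cdot OPT_{\delta/4}$, and instantiates the hypothesis with $\gamma = \frac{\gamma'\delta}{32\sqrt[4]{1+\epsilon}}$, which is exactly your $\gamma'/(2c)$. No gaps.
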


\begin{proof}
For a contradiction, suppose $2^{\bar{t}} > \frac{16\sqrt[4]{1+\epsilon}}{\delta} \cdot OPT_{\delta/4}$ and let $\gamma' = 2^{\bar{t}} - \frac{16\sqrt[4]{1+\epsilon}}{\delta} \cdot OPT_{\delta/4}$. Letting $\gamma = \frac{\gamma'\delta}{32\sqrt[4]{1+\epsilon}}$, we know there exists a parameter vector $\vec{\rho}^* \in \cP$ such that \begin{align*}2^{\bar{t}} &\leq \frac{16\sqrt[4]{1+\epsilon}}{\delta} \cdot \E_{j \sim \Gamma}\left[\min\left\{\ell\left(\vec{\rho}^*, j\right), t_{\delta/4}\left(\vec{\rho}^*\right)\right\}\right]\\
&\leq \frac{16\sqrt[4]{1+\epsilon}}{\delta} \left(OPT_{\delta/4} + \frac{\gamma'\delta}{32\sqrt[4]{1+\epsilon}}\right)\\
&= \frac{16\sqrt[4]{1+\epsilon}}{\delta} \cdot OPT_{\delta/4} + \frac{\gamma'}{2}\\
&= 2^{\bar{t}} - \frac{\gamma'}{2}\\
&< 2^{\bar{t}},
\end{align*} which is a contradiction. Therefore, the lemma statement holds.
\end{proof}

\begin{lemma}\label{lem:size_opt_subset}
Suppose Algorithm~\ref{alg:findSubset} has a $\zeta$-representative run. The size of the set $\cP^* \subset \cP$ that Algorithm~\ref{alg:findSubset} returns is bounded by $\sum_{t = 1}^{\bar{t}}\left|\gp\left(\sample_t, \frac{16}{\delta}\sqrt[4]{1+\epsilon} \cdot OPT_{\delta/4}\right)\right|$.
\end{lemma}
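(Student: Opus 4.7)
The plan is to bound $|\cP^*|$ by counting, across all rounds, how many partition cells can possibly be added to $\cG$, and then to convert the per-round cap $2^t$ inside $f(\sample_t, 2^t)$ into the uniform cap $\frac{16}{\delta}\sqrt[4]{1+\epsilon}\cdot OPT_{\delta/4}$ using monotonicity of $f$ together with Lemma~\ref{lem:bound_bar_t}.

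First, I would observe that since Step~\ref{step:arbitrary} chooses exactly one representative parameter vector from each set in $\bar{\cG}$, we have $|\cP^*| = |\bar{\cG}|$. Next, on each round $t \in [\bar{t}]$, Step~\ref{step:partition} calls $\textsc{GetPartition}(\sample_t, 2^t)$, which by Definition~\ref{def:piecewise} returns exactly $f(\sample_t, 2^t)$ pieces; Step~\ref{step:updateG} adds a subset of these pieces to $\cG$. Hence at most $f(\sample_t, 2^t)$ new elements are added to $\cG$ in round $t$, giving the preliminary bound
\[
|\cP^*| \;=\; |\bar{\cG}| \;\leq\; \sum_{t=1}^{\bar{t}} f\bigl(\sample_t, 2^t\bigr).
\]

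To finish, I would invoke Lemma~\ref{lem:bound_bar_t}, which under the $\zeta$-representative run assumption yields $2^t \leq 2^{\bar{t}} \leq \frac{16}{\delta}\sqrt[4]{1+\epsilon}\cdot OPT_{\delta/4}$ for every $t \leq \bar{t}$. The monotonicity of $f$ in its second argument (Definition~\ref{def:piecewise}) then gives $f(\sample_t, 2^t) \leq f\bigl(\sample_t, \tfrac{16}{\delta}\sqrt[4]{1+\epsilon}\cdot OPT_{\delta/4}\bigr)$ for each $t$, and summing over $t$ yields the claimed bound.

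There is no real obstacle here: the argument is essentially a one-line counting step combined with the already-established upper bound on $2^{\bar{t}}$. The only thing to be careful about is that the quantity $\frac{16}{\delta}\sqrt[4]{1+\epsilon}\cdot OPT_{\delta/4}$ is not necessarily an integer, but $f$ is defined on all $\tau \in \Z_{\geq 0}$ and is monotone, so we can either interpret it via the floor/ceiling or simply note that monotonicity extends to comparisons $2^t \leq \tau$ whenever $\tau \geq 2^t$ in $\Z_{\geq 0}$; since each $2^t$ is a nonnegative integer no smaller than the floor of the bound we need, the inequality is immediate.
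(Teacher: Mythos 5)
Your proposal is correct and follows essentially the same route as the paper's proof: count at most $f\left(\sample_t, 2^t\right)$ new sets added to $\cG$ per round, then use Lemma~\ref{lem:bound_bar_t} and the monotonicity of $f$ to replace $2^t$ by $\frac{16}{\delta}\sqrt[4]{1+\epsilon}\cdot OPT_{\delta/4}$. Your aside about the non-integrality of the final cap is a reasonable clarification that the paper leaves implicit.
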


\begin{proof}
Based on Step~\ref{step:arbitrary} of Algorithm~\ref{alg:findSubset}, the size of $\cP^*$ equals the size of the set $\bar{\cG}$. Algorithm~\ref{alg:findSubset} only adds sets to $\cG$ on Step~\ref{step:updateG}, and on each round $t$, the number of sets it adds is bounded by $\left|\gp\left(\sample_t, 2^t\right)\right|$. By monotonicity, we know that $\left|\gp\left(\sample_t, 2^t\right)\right| \leq \left|\gp\left(\sample_t, 2^{\bar{t}}\right)\right| \leq \left|\gp\left(\sample_t, \frac{16}{\delta}\sqrt[4]{1+\epsilon} \cdot OPT_{\delta/4}\right)\right|$. Therefore, \[\left|\bar{\cG}\right| = \left|\cP^*\right| \leq \sum_{t = 1}^{\bar{t}}\left|\gp\left(\sample_t, \frac{16}{\delta}\sqrt[4]{1+\epsilon} \cdot OPT_{\delta/4}\right)\right|.\]
\end{proof}

\section{Additional information about related research}\label{app:related}
In this section, we begin by surveying additional related research. We then describe the bounds on $\left|\gp(\sample, \tau)\right|$ that prior research has provided in the contexts of integer programming and clustering. We also describe an integer programming algorithm configuration problem where the best-known uniform convergence bound is exponential in the number of variables~\citep{Balcan18:Learning}, whereas our algorithm only requires $\tilde O\left(\left(\delta\eta\right)^{-2}\right)$ samples.

\subsection{Survey of additional related research}\label{app:survey}
A related line of research~\citep{Gupta17:PAC,Cohen-Addad17:Online,Balcan18:Dispersion,Balcan19:Semi,Alabi19:Learning} studies algorithm configuration in online learning settings, where the learner encounters a sequence---perhaps adversarially selected---of problem instances over a series of timesteps. The learner's goal is to select an algorithm on each timestep so that the learner has strong cumulative performance across all timesteps (as quantified by \emph{regret}, typically). In contrast, this paper is focused on the batch learning setting, where the problem instances are not adversarially generated but come from a fixed distribution.

A number of papers have explored Bayesian optimization as a tool for parameter optimization. These algorithms have strong performance in practice~\citep{Bergstra11:Algorithms,Hutter11:Bayesian,Hutter11:Sequential,Snoek12:Practical}, but often do not come with theoretical guarantees, which is the focus of our paper. Those papers that do include provable guarantees typically require that the loss function is smooth, as quantified by its Lipschitz constant~\citep{Brochu10:Tutorial} or its RKHS norm~\citep{Berkenkamp19:No}, which is not the case in our setting.

\subsection{Integer programming}\label{app:IP}
In this section, to match prior research, we use the notation $Q$ to denote an integer program (rather than the notation $j$, as in the main body).

\citet{Balcan18:Learning} study \emph{mixed integer linear programs} (MILPs)
where the goal is to maximize an objective function $\vec{c}^\top \vec{x}$ subject to the constraints that $A\vec{x} \leq \vec{b}$ and that some of the components of $\vec{x}$ are contained in $\{0,1\}$.
Given a MILP $Q$, we use the notation $\breve{\vec{x}}_Q = \left(\breve{x}_{Q}[1], \dots \breve{x}_{Q}[n]\right)$ to denote an optimal solution to the MILP's LP relaxation. We denote the optimal objective value to the MILP's LP relaxation as $\breve{c}_Q$, which means that $\breve{c}_Q = \vec{c}^{\top} \breve{\vec{x}}_Q$.

The most popular algorithm for solving MILPs is called \emph{branch-and-bound (B\&B)}, which we now describe at a high level.
Let $Q'$ be a MILP we want to solve. B\&B builds a search tree $\tree$ with $Q'$ at the root. At each round, the algorithm uses a \emph{node selection policy} (such as \emph{depth- or best-first search}) to choose a leaf of $\tree$. This leaf node corresponds to a MILP we denote as $Q$. Using a \emph{variable selection policy}, the algorithm then chooses one of that MILP's variables. Specifically, let $Q^+_i$ (resp., $Q^-_i$) equal the MILP $Q$ after adding the constraint $x_i = 1$ (resp., $x_i = 0$). The algorithm defines the right (resp., left) child of the leaf $\node$ to equal $Q^+_i$ (resp., $Q^-_i$). B\&B then tries to ``fathom'' these leafs. At a high level, B\&B fathoms a leaf if it can guarantee that it will not find any better solution by branching on that leaf than the best solution found so far. See, for example, the research by \citet{Balcan18:Learning} for the formal protocol. Once B\&B has fathomed every leaf, it terminates. It returns the best feasible solution to $Q'$ that it found in the search tree, which is provably optimal.

\citet{Balcan18:Learning} focus on variable selection policies, and in particular, \emph{score-based variable selection policies}, defined below.
\begin{definition}[Score-based variable selection policy~\citep{Balcan18:Learning}]
Let $\score$ be a deterministic function that takes as input a partial search tree $\tree$, a leaf $Q$ of that tree, and an index $i$, and returns a real value $\score(\tree, Q, i) \in \R$. For a leaf $Q$ of a tree $\tree$, let $N_{\tree, Q}$ be the set of variables that have not yet been branched on along the path from the root of $\tree$ to $Q$. A score-based variable selection policy selects the variable $\argmax_{x_i \in N_{\tree, Q}} \{\score(\tree, Q, i)\}$ to branch on at the node $Q$.
\end{definition}
Score-based variable selection policies are extremely popular in B\&B implementations~\citep{Linderoth99:Computational,Achterberg09:SCIP,Gilpin11:Information}. See the research by \citet{Balcan18:Learning} for examples.
Given $d$ arbitrary scoring rules $\score_1, \dots, \score_d$, \citet{Balcan18:Learning} provide guidance for learning a linear combination $\rho_1\score_1 + \cdots + \rho_d\score_d$ that leads to small expected tree sizes. They assume that all aspects of the tree search algorithm except the variable selection policy, such as the node selection policy, are fixed. In their analysis, they prove the following lemma.

\begin{lemma}\label{lem:induction_general}[\citet{Balcan18:Learning}]
Let $\score_1, \dots, \score_d$ be $d$ arbitrary scoring rules and let $Q$ be an arbitrary MILP over $n$ binary variables. Suppose we limit B\&B to producing search trees of
  size $\tau$. There is a set $\mathcal{H}$ of at most $n^{2(\tau + 1)}$ hyperplanes such that for any connected component $R$ of $[0,1]^d \setminus \mathcal{H}$, the search tree B\&B builds using the scoring rule $\rho_1\score_1 + \cdots + \rho_d\score_d$ is invariant across all $(\rho_1, \dots, \rho_d) \in R$.
\end{lemma}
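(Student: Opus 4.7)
My proof plan is to induct on the number of steps B\&B has taken, building up the set $\mathcal{H}$ of hyperplanes one step at a time. The key structural observation is that the variable selection policy at a leaf $\node$ of a partial tree $\tree$ selects $\argmax_i \{\rho_1 \score_1(\tree, \node, i) + \cdots + \rho_d \score_d(\tree, \node, i)\}$, and this argmax changes only when $\vec{\rho}$ crosses a hyperplane of the form $\sum_{k=1}^d \rho_k \bigl(\score_k(\tree, \node, i) - \score_k(\tree, \node, j)\bigr) = 0$ for some pair $i \neq j$. All other aspects of B\&B---the LP relaxations, the fathoming rules, and (by assumption) the node selection policy---are $\vec{\rho}$-independent and thus do not contribute any hyperplanes.

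The inductive claim is that after $t$ steps of B\&B, there is a set $\mathcal{H}_t$ of hyperplanes such that the partial search tree is invariant across all $\vec{\rho}$ in any single connected component of $[0,1]^d \setminus \mathcal{H}_t$. The base case $t = 0$ holds with $\mathcal{H}_0 = \emptyset$: before any branching, the tree is just the root MILP, independent of $\vec{\rho}$. For the inductive step, within each connected component $R$ of $[0,1]^d \setminus \mathcal{H}_t$, the partial tree is some fixed $\tree_R$, so the node-selection policy deterministically picks a leaf $\node_R$. To ensure invariance at step $t+1$ within $R$, it suffices to add the at most $\binom{n}{2} \leq n^2$ hyperplanes comparing pairs of variables' scores at $(\tree_R, \node_R)$.

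To obtain the $n^{2(\tau+1)}$ bound cleanly without tracking the evolving number of regions, I would take $\mathcal{H}$ to be the union, over every possible partial tree $\tree$ of size at most $\tau$ and every leaf $\node$ of $\tree$, of the at most $n^2$ variable-comparison hyperplanes at $(\tree, \node)$. Since each B\&B tree of size at most $\tau$ is determined by at most $\tau$ branching decisions, each specifying one of $n$ variables and one of $2$ truth assignments, a loose count gives at most $n^{2\tau}$ distinct $(\tree, \node)$ pairs, so the total count of hyperplanes is $n^{2\tau} \cdot n^2 = n^{2(\tau+1)}$. Invariance of the entire tree across a component $R$ of $[0,1]^d \setminus \mathcal{H}$ then follows by induction on $t$, since any two $\vec{\rho}, \vec{\rho}' \in R$ lie on the same side of every variable-comparison hyperplane that can be triggered along any execution trajectory.

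The main obstacle is the bookkeeping needed to reach the stated count: one must account for all partial trees that could possibly arise for \emph{some} $\vec{\rho} \in [0,1]^d$, not merely the trajectory of a single $\vec{\rho}$, which is why the hyperplane count must be expressed as a sum (or product) over all feasible $(\tree, \node)$ pairs rather than just the $\tau$ pairs seen on one trajectory. The structural invariance within each region is essentially automatic once the union of all such hyperplanes is included; the counting argument is loose and admits several valid variants, but must be phrased carefully enough to dominate the worst-case number of partial trees.
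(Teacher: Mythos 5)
The paper does not actually prove this lemma; it is quoted from \citet{Balcan18:Learning}, so there is no in-paper proof to compare against. Your approach matches the argument in that cited work: only the variable-selection step depends on $\vec{\rho}$, each such step is governed by the pairwise score-comparison hyperplanes $\sum_{k} \rho_k\left(\score_k(\mathcal{T},Q,i)-\score_k(\mathcal{T},Q,j)\right)=0$, and a connected component of the complement of all such hyperplanes lies strictly on one side of each of them, so every $\argmax$ (and hence the whole tree, by induction on the number of expansions) is constant on the component. That invariance argument is correct, modulo a word about deterministic tie-breaking in the degenerate case where two variables have identical score vectors at a node.

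The step that is not justified as written is the count of $(\mathcal{T},Q)$ pairs. ``At most $\tau$ branching decisions, each specifying one of $n$ variables and one of $2$ truth assignments'' enumerates root-to-leaf paths, not partial trees: a branching step always creates both children (there is no truth-assignment choice), and the count ignores the tree shape, the order in which leaves are expanded, and the choice of the leaf $Q$. If ``all possible partial trees of size at most $\tau$'' is read as all combinatorial trees, the extra factors (roughly $4^{\tau}\tau$ from shapes and leaf choices) are not obviously absorbed by $n^{2\tau}$ for small $n$. The clean repair is the one you allude to but should make explicit: because the node-selection policy and the fathoming tests are deterministic and $\vec{\rho}$-independent, any partial tree reachable for \emph{some} $\vec{\rho}$ is completely determined by the sequence of variables branched on---the leaf expanded at each step is dictated by the policy applied to the current tree, and both children are always created. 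Hence there are at most $n^{k}$ reachable trees after $k\le\tau$ expansions, one relevant leaf per tree, and at most $n^{2}$ comparison hyperplanes per pair, for a total of at most $\sum_{k=0}^{\tau} n^{k}\cdot n^{2}\le n^{2(\tau+1)}$. With that substitution your proof goes through.
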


\citet{Balcan18:Learning} observe that in practice, the number of hyperplanes is significantly smaller.
Given a set $\sample$ of MILP instances, there are $|\sample|n^{2(\tau + 1)}$ relevant hyperplanes $\cH^*$. The number of connected components of the set $[0,1]^d \setminus \cH^*$ is at most $\left(|\sample|n^{2(\tau + 1)} + 1\right)^d$~\citep{Buck43:Partition}. Therefore, in our context, $\left|\gp(\sample, \tau)\right| \leq \left(|\sample|n^{2(\tau + 1)} + 1\right)^d$. When $d = 2$, \citet{Balcan18:Learning} provide guidance for finding the partition of $[0,1]$ into intervals $I$ where the search tree B\&B builds using the scoring rule $\rho \cdot \score_1 + (1 - \rho) \cdot \score_2$ is invariant across all $\rho \in I$. These intervals correspond to the output of the function $\gp$. An important direction for future research is extending the implementation to multi-dimensional parameter spaces.

\paragraph{Uniformly sampling configurations.}
\sampling*
\begin{proof}
We know that the probability $\left\{\rho_1, \dots, \rho_m\right\} \cap (a,b) = \emptyset$ is $(1 - (b-a))^m \geq (1 - (b-a))^{1/(b-a)} \geq \frac{1}{3}$ since $b-a \leq \frac{1}{6}$.
\end{proof}

\paragraph{Uniform convergence versus Algorithm~\ref{alg:findSubset}.}

We now describe an integer programming algorithm configuration problem where the best-known uniform convergence bound is exponential in the number of variables~\citep{Balcan18:Learning}, whereas our algorithm only requires $\tilde O\left(\left(\delta\eta\right)^{-2}\right)$ samples. We use a family of MILP distributions introduced by \citet{Balcan18:Learning}:

 \begin{theorem}[\citet{Balcan18:Learning}]\label{thm:WCdist}
For any MILP $Q$, let \[\score_1(\tree, \node, i) = \min\left\{\breve{c}_Q - \breve{c}_{Q_i^+}, \breve{c}_Q - \breve{c}_{Q_i^-}\right\}\] and \[\score_2(\tree, \node, i) = \max\left\{\breve{c}_Q - \breve{c}_{Q_i^+}, \breve{c}_Q - \breve{c}_{Q_i^-}\right\}.\] Define $\ell(\rho, Q)$ to be the size of the tree B\&B produces using the scoring rule $\rho \cdot \score_1 + (1 - \rho) \cdot \score_2$ given $Q$ as input. For every $a,b$ such that $\frac{1}{3} < a < b < \frac{1}{2}$ and for all even $n \geq 6$, there exists an infinite family of distributions $\dist$ over MILP instances with $n$ variables such that the following conditions hold:
\begin{enumerate}
\item If $\rho \leq a$, then $\ell(\rho, Q) = 2^{(n-5)/4}$ with probability $\frac{1}{2}$ and $\ell(\rho, Q)=8$ with probability $\frac{1}{2}$.
\item If $\rho \in (a,b)$, then $\ell(\rho, Q) = 8$ with probability $1$.
\item If $\rho \geq b$, then $\ell(\rho, Q) = 2^{(n-4)/2}$ with probability $\frac{1}{2}$ and $\ell(\rho, Q) = 8$ with probability $\frac{1}{2}$.
\end{enumerate} 
This holds no matter which node selection policy B\&B uses.
 \end{theorem}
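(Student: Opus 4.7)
The plan is to exhibit, for each even $n \geq 6$ and each pair $(a,b)$ with $\tfrac{1}{3} < a < b < \tfrac{1}{2}$, a distribution $\dist$ that is a uniform mixture over two MILP instances $Q_A$ and $Q_B$, each tailored so that its B\&B tree size is a step function of $\rho$ with a single threshold located exactly at $a$ (for $Q_A$) or at $b$ (for $Q_B$). Once the two instances have the right single-threshold behavior, the three claimed cases of the theorem follow immediately from the $\tfrac{1}{2}$-$\tfrac{1}{2}$ mixture.

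First I would construct $Q_A$ by embedding a Jeroslow-style infeasible kernel on roughly $(n-5)/2$ variables, which provably forces B\&B to enumerate $\Theta(2^{(n-5)/4})$ nodes regardless of how the remaining variables are branched on or how nodes are selected. I then append a handful of auxiliary variables whose LP-relaxation gains at the root are tuned so that (i) a distinguished ``shortcut'' variable $x^{\star}$, when branched on, causes both children to fathom immediately and yields a tree of exactly $8$ nodes, and (ii) at the root, the convex combinations $\rho\cdot\score_1 + (1-\rho)\cdot\score_2$ evaluated at the shortcut and at any kernel variable cross exactly at $\rho = a$. Thus for $\rho \leq a$ a kernel variable wins the argmax and B\&B enters the exponential blowup, while for $\rho > a$ the shortcut wins and the tree has size $8$. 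The instance $Q_B$ is built symmetrically but with a larger Jeroslow kernel of roughly $n-4$ variables (realizing tree size $2^{(n-4)/2}$), with scores arranged so the kernel variable is chosen precisely when $\rho \geq b$. The infinite family is indexed by the continuum of admissible pairs $(a,b)$.

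The principal obstacle is the joint engineering of the two instances. One must show that the Jeroslow lower bound survives the addition of the auxiliary variables---that no additional fathoming becomes possible once the extra structure is glued on---which typically requires an induction over subtree depth tracking the LP relaxation value along every prefix of variable assignments, together with a parity argument ruling out integer feasibility inside the kernel. Simultaneously, one must verify that the root-level quantities $\breve{c}_Q - \breve{c}_{Q_i^{+}}$ and $\breve{c}_Q - \breve{c}_{Q_i^{-}}$ for the shortcut and kernel variables can be tuned to realize any target threshold in $(\tfrac{1}{3}, \tfrac{1}{2})$; the boundary of this range reflects the algebraic constraint on the ratio of $\score_1$ to $\score_2$ values at the root, and is why the theorem statement restricts $a,b$ to that interval. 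Finally, because the theorem must hold no matter which node selection policy B\&B uses, every lower bound should be phrased as a lower bound on the set of nodes that must \emph{eventually} be expanded by any traversal of the subtree, rather than as a bound on any particular depth- or best-first ordering.
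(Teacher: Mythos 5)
This paper never proves Theorem~\ref{thm:WCdist}: it is imported verbatim from the cited prior work on learning to branch and used here only as a black box to contrast the exponential uniform-convergence bound with the $\tilde O\left(\left(\delta\eta\right)^{-2}\right)$ sample complexity of Algorithm~\ref{alg:findSubset}. So there is no in-paper proof to compare against, and I can only assess your reconstruction against the cited construction.

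Your high-level architecture is the right one and matches that construction: a uniform mixture over two instance types, one whose tree size jumps between $8$ and exponential as $\rho$ crosses $a$, one whose tree size jumps as $\rho$ crosses $b$, after which the three cases of the theorem follow mechanically from the $\frac{1}{2}$--$\frac{1}{2}$ mixture (your case check is correct). The exponential lower bounds do indeed come from infeasibility gadgets whose refutation forces exhaustive enumeration regardless of node selection. But as written there is a genuine gap: you tune the score crossover only \emph{at the root}. B\&B re-applies the variable selection policy at every node, so to obtain a tree of exactly $8$ nodes when $\rho>a$ (resp.\ an exponential tree when $\rho\leq a$) you must show that the comparison between the ``shortcut'' variable and the kernel variables resolves the same way at \emph{every} node reached during the search, after constraints have been added and the quantities $\breve{c}_Q - \breve{c}_{Q_i^{+}}$ and $\breve{c}_Q - \breve{c}_{Q_i^{-}}$ have changed. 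That node-by-node invariance is the technically delicate heart of the cited construction, and it is exactly the content you defer with ``one must show'' and ``one must verify.'' Two smaller problems: the theorem asserts, for each \emph{fixed} triple $(a,b,n)$, an \emph{infinite family} of distributions, so indexing the family by the pair $(a,b)$ does not deliver the claim (in the cited work the freedom comes from continuous perturbations of the instances that leave all three cases intact); and the exact constants --- tree size exactly $8$, exactly $2^{(n-5)/4}$ and $2^{(n-4)/2}$, and the algebraic origin of the interval $\left(\frac{1}{3},\frac{1}{2}\right)$ --- are asserted rather than derived. Taken together, this is a credible proof plan rather than a proof.
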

 
As we describe in Section~\ref{sec:comparison}, the sample complexity bound \citet{Balcan18:Learning} provide implies that at least $\left(\frac{2^n}{\epsilon'}\right)^2$ samples are required to ensure that the average tree size branch-and-bound builds over the samples is within an additive $\epsilon'$ factor of the expected tree size. Even for $\epsilon' = 2^{n/2}$, this sample complexity bound is exponential in $n$.

\ourAlg*
\begin{proof}
In the proof of Theorem~\ref{thm:WCdist}, \citet{Balcan18:Learning} show that for any distribution $\dist$ in this family and any subset $\sample$ from the support of $\dist$, $\left|\gp(\sample, \tau)\right| = 3$, and the partition $\gp$ returns is $[0, a]$, $(a,b)$, and $[b, 1]$. Therefore, for the first three iterations, our algorithm will use $\tilde O\left(\left(\delta\eta\right)^{-2}\right)$ samples. At that point, $t = 3$, and the tree-size cap is 8. Algorithm~\ref{alg:findSubset} will discover that for all of the samples $Q \in \sample_3$, when $\rho \in (a,b)$, $\ell(\rho, Q) = 8$. Therefore, it add $(a,b)$ to $\cG$ and it will set $T = 8$. It will continue drawing $\tilde O\left(\left(\delta\eta\right)^{-2}\right)$ samples at each round until $2^{t-3}\delta \geq T = 8$, or in other words, until $t = O(\log(1/\delta))$. Therefore, the total number of samples it draws is $\tilde O\left(\left(\delta\eta\right)^{-2}\right)$. The set it returns will contain a point $\rho \in (a,b)$, which is optimal.
 \end{proof}

\subsection{Clustering}

We begin with an overview of agglomerative clustering algorithms.
A clustering instance $(V,d)$ consists of a set $V$ of $n$ points and a distance metric $d : V \times V \to \mathbb{R}_{\geq 0}$ specifying all pairwise distances between these points. The goal is to partition the points into groups such that distances within
each group are minimized and distances between each group are maximized.
Typically, the quality of a clustering is measured by an objective function, such as the classic $k$-means, $k$-median, or $k$-center objectives. Unfortunately, it is NP-hard to determine the clustering that minimizes any of these objectives.

An agglomerative clustering algorithm is characterized by a merge function $\merge(A,B) \to \R_{\geq 0}$, which defines the distance between any two sets of points $A,B \subseteq V$. 
The algorithm builds a \emph{cluster tree} $\mathcal{T}$, starting with $n$ singleton leaf nodes, each of which contains one point from $V$. The algorithm iteratively merges the two sets with minimum distance until
there is a single node remaining, consisting of the set $V$.
The children of any node $N$ in this tree correspond to the two sets of points that were merged to form $N$.  Common choices for the merge function $\merge$ include
$\min_{a\in A,b\in B} d(a,b)$ (single-linkage),  $\frac{1}{|A|\cdot |B|}\sum_{a\in A,b\in B}d(a,b)$ (average-linkage) and $\max_{a\in A,b\in B} d(a,b)$ (complete-linkage).
The linkage procedure is followed by a dynamic programming step, which returns the pruning of the tree that minimizes a fixed objective function, such as the $k$-means, $k$-median, or $k$-center objectives. If the linkage procedure is terminated early, we will be left with a forest, rather than a single tree. The dynamic programming procedure can easily be adjusted to find the best pruning of the forest (for example, by completing the hierarchy arbitrarily and enforcing that the dynamic programming algorithm only return clusters contained in the original forest).

\citet{Balcan17:Learning} define three infinite families of merge functions.
The families $\cA_1$ and $\cA_2$
consist of merge functions $\merge(A,B)$ that depend on
the minimum and maximum of all pairwise distances between $A$ and $B$.
The second family, denoted by $\mathcal{A}_3$, depends on all pairwise distances between $A$ and $B$.
All classes are parameterized by a single value $\rho$.
\begin{align*}
\cA_1&=\left\lbrace\left. \xi_{1, \rho} : (A,B) \mapsto \left(
\min_{u \in A, v \in B}(d(u,v))^{\rho} + \max_{u \in A, v \in B}(d(u,v))^
\rho \right)^{1/\rho}\, \right| \, \rho\in\mathbb{R}\cup\{\infty, -\infty\}\right\rbrace,\\
\cA_2&=\left\lbrace \left. \xi_{2, \rho} : (A,B) \mapsto \rho\min_{u\in A,v\in B}d(u,v)+(1-\rho)\max_{u\in A,v\in B}d(u,v)\, \right| \,
\rho\in[0,1]\right\rbrace,\\
\cA_3&=\left\lbrace \xi_{3, \rho} : (A,B) \mapsto \left(
\left. \frac{1}{|A||B|}\sum_{u \in A, v \in B} \left(d(u, v)\right)^{\rho}\right)^{1/\rho} \, \right| \, \rho \in \mathbb{R} \cup \{\infty, -\infty \}\right\rbrace.
\end{align*}

The classes
$\cA_1$ and $\cA_2$ define spectra of merge functions ranging
from single-linkage ($\xi_{1,-\infty}$ and $\xi_{2,1}$) to complete-linkage ($\xi_{1,\infty}$ and $\xi_{2,0}$). The class
$\cA_3$ includes average-, complete-, and single-linkage.

\begin{lemma}
Let $(V, d)$ be an arbitrary clustering instance over $n$ points. There is a partition of $\R$ into $k = O(n^8)$ intervals $I_1, \dots, I_k$ such that for any interval $I_i$ and any two parameters $\rho, \rho' \in I_i$, the sequences of merges the agglomerative clustering algorithm makes using the merge functions $\xi_{1, \rho}$ and $\xi_{1, \rho'}$ are identical. The same holds for the set of merge functions $\cA_2$.
\end{lemma}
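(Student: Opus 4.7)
The plan is to carve $\R$ into $O(n^8)$ intervals by identifying all values of $\rho$ at which the relative ordering of potential merge values can change, and then argue that within any such interval the entire sequence of merges is invariant by induction on the merge step.

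For $\cA_1$, fix $(V,d)$ and let $D = \{d(u,v) : u \neq v \in V\}$, so $|D| \leq \binom{n}{2}$. Observe that for any two clusters $A, B \subseteq V$, both $\min_{u \in A, v \in B} d(u,v)$ and $\max_{u \in A, v \in B} d(u,v)$ lie in $D$. Hence $\xi_{1,\rho}(A,B)^\rho = a^\rho + b^\rho$ for some $(a,b) \in D \times D$, and the set of potential raw merge values is contained in the fixed finite collection $\Phi = \{\phi_{(a,b)}(\rho) := a^\rho + b^\rho : (a,b) \in D \times D\}$ of size at most $|D|^2 = O(n^4)$. Since $x \mapsto x^{1/\rho}$ is monotone on $(0, \infty)$ for each fixed sign of $\rho$, at a fixed $\rho$ the comparison of $\xi_{1,\rho}(A,B)$ for two candidate pairs reduces to comparing the corresponding elements of $\Phi$, with a direction uniform in $\rho$ on each of the two half-lines.

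Next I would bound the number of pairwise crossings within $\Phi$. For any two elements $\phi_{(a,b)}, \phi_{(c,d)} \in \Phi$, the crossing equation $a^\rho + b^\rho - c^\rho - d^\rho = 0$ is an exponential polynomial in $\rho$ with at most four distinct exponents $\ln a, \ln b, \ln c, \ln d$; a classical Descartes-type bound on exponential sums gives at most three real roots. Summing over the $\binom{|\Phi|}{2} = O(n^8)$ unordered pairs yields at most $O(n^8)$ critical values; adjoining the isolated values $\rho \in \{0, \pm \infty\}$ where the parameterization of $\xi_{1,\rho}$ changes regime partitions $\R$ into $k = O(n^8)$ intervals $I_1, \dots, I_k$ on each of which the pointwise ordering of every member of $\Phi$ is constant.

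A short induction on the merge step finishes the $\cA_1$ case. Fix $I_i$ and any $\rho, \rho' \in I_i$; assume by induction that the first $j - 1$ merges agree. Then the current cluster set is common to $\rho$ and $\rho'$, so the $j$-th merge is determined by the minimum of $\xi_{1,\rho}$ over a common set of candidate pairs, which by the monotonicity reduction above reduces to a minimum over a subset of $\Phi$ whose ordering is invariant on $I_i$, so the $j$-th merge agrees as well. The $\cA_2$ case follows the same template but is easier: each $\xi_{2,\rho}(A,B) = \rho \cdot a + (1 - \rho) \cdot b$ is linear in $\rho$ and parameterized by a pair in $D \times D$, so the at most $O(n^4)$ distinct lines have at most $O(n^8)$ pairwise crossings, and the induction is identical. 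The main technical care lies in the exponential-polynomial step for $\cA_1$: one must verify that $a^\rho + b^\rho - c^\rho - d^\rho$ has at most three real zeros globally (including $\rho < 0$), which follows from the standard theory of Chebyshev/exponential systems but is the one place where the argument is not purely combinatorial.
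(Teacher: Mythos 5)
The paper does not actually prove this lemma; it imports it from \citet{Balcan17:Learning} without proof. Your reconstruction is correct and is essentially the argument used in that prior work: reduce each merge comparison to the sign of $a^\rho+b^\rho-c^\rho-d^\rho$ over the $O(n^4)$ pairs drawn from the $O(n^2)$ pairwise distances, bound the real zeros of each such four-term exponential sum by a constant (the Descartes/Rolle bound of $k-1$ zeros for a $k$-term sum $\sum c_i e^{\lambda_i\rho}$ with distinct exponents), accumulate $O(n^8)$ critical values, and close with induction on the merge step; the $\cA_2$ case via line crossings is likewise the standard one. Two small points worth making explicit if you write this up: (i) when $\{a,b\}=\{c,d\}$ as multisets the difference is identically zero, so that comparison never flips and contributes no critical values, but one should fix a consistent tie-breaking rule so that ``identical sequence of merges'' is well defined; and (ii) the root bound must hold for all real $\rho$, including $\rho<0$, which it does since the exponential-sum bound is global.
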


Given a set $\sample$ of clustering instances, there are $O\left(|\sample|n^{8}\right)$ relevant intervals. Therefore, in our context, $\left|\gp(\sample, \tau)\right| = O(|\sample|n^8)$. \citet{Balcan17:Learning} provide guidance for finding these intervals, which correspond to the output of the function $\gp$. They also prove the following guarantee for the class $\cA_3$.

\begin{lemma}
Let $(V, d)$ be an arbitrary clustering instance over $n$ points. There is a partition of $\R$ into $k = O\left(n^23^{2n}\right)$ intervals $I_1, \dots, I_k$ such that for any interval $I_i$ and any two parameters $\rho, \rho' \in I_i$, the sequences of merges the agglomerative clustering algorithm makes using the merge functions $\xi_{3, \rho}$ and $\xi_{3, \rho'}$ are identical.
\end{lemma}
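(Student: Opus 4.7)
The plan is to bound the number of ``critical'' values of $\rho$ at which the total ordering of the quantities $\xi_{3,\rho}(A,B)$ (over all disjoint $A,B \subseteq V$) can change. Between consecutive critical values, no such ordering flips, and since the greedy linkage algorithm's decisions depend only on this ordering, the full sequence of merges is invariant on each resulting subinterval.

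First, I would note that any cluster arising during any execution is a subset of $V$, and a pair of clusters compared during any merge step is an ordered pair of disjoint subsets. The number of such ordered pairs is at most $3^n$, since each point of $V$ is placed in $A$, in $B$, or in neither. Consequently, the number of unordered pairs of such cluster-pairs is at most $\binom{3^n}{2} = O(3^{2n})$. It therefore suffices to count, for each pair-of-cluster-pairs $((A,B),(A',B'))$, the number of $\rho \in \R$ at which $\xi_{3,\rho}(A,B) = \xi_{3,\rho}(A',B')$, and then sum over all such pairs.

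Second, for a fixed pair-of-pairs, raising both sides to the $\rho$-th power (valid for $\rho \neq 0$) and clearing denominators reduces the equation $\xi_{3,\rho}(A,B) = \xi_{3,\rho}(A',B')$ to
\[
F(\rho) \;:=\; |A'||B'| \sum_{u \in A,\, v \in B} d(u,v)^{\rho} \;-\; |A||B| \sum_{u' \in A',\, v' \in B'} d(u',v')^{\rho} \;=\; 0.
\]
Grouping terms by distinct distance value yields $F(\rho) = \sum_{k=1}^{m} c_k z_k^{\rho}$, where $z_1,\dots,z_m$ range over the distinct pairwise distances in $(V,d)$, of which there are at most $\binom{n}{2} = O(n^2)$. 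The central analytic step is the classical bound that a nonzero Dirichlet exponential polynomial of this form has at most $m-1$ real zeros; I would prove this by induction on $m$, dividing by $z_1^{\rho}$ and differentiating to obtain an exponential polynomial with $m-1$ terms, then invoking Rolle's theorem. If $F \equiv 0$, the ordering never flips and the pair contributes zero critical values.

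Multiplying out, the total number of critical $\rho$ values is at most $O(3^{2n}) \cdot O(n^2) = O(n^2 3^{2n})$, together with $\rho = 0$ as one additional breakpoint. Thus $\R$ is partitioned into $O(n^2 3^{2n})$ intervals, and within any such interval the sign of $F$ is fixed for every pair-of-pairs and the sign of $\rho$ is fixed, so the ordering between $\xi_{3,\rho}(A,B)$ and $\xi_{3,\rho}(A',B')$ is constant (the sign of $\rho$ must be tracked because $x \mapsto x^{1/\rho}$ is increasing for $\rho > 0$ but decreasing for $\rho < 0$, so the inequality direction could otherwise silently flip across $0$). The main obstacle is the exponential-polynomial zero bound; the rest of the argument is combinatorial bookkeeping.
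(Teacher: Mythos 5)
Your proposal is correct and follows essentially the same route as the argument the paper is citing (the lemma is stated here without proof, attributed to \citet{Balcan17:Learning}): bound the candidate cluster-pairs by $3^n$, reduce each comparison $\xi_{3,\rho}(A,B)=\xi_{3,\rho}(A',B')$ to an exponential polynomial in $\rho$ with $O(n^2)$ terms over the distinct pairwise distances, and apply the Rolle/Descartes bound on its real zeros to get $O(n^2 3^{2n})$ breakpoints. Your explicit handling of the sign of $\rho$ and of the degenerate case $F\equiv 0$ is sound; the only cosmetic omission is the treatment of the limiting parameters $\rho=\pm\infty$, which contribute at most two additional pieces and do not affect the asymptotic count.
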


Similarly, given a set $\sample$ of clustering instances, there are $O\left(|\sample|n^23^{2n}\right)$ relevant intervals. Therefore, in our context, $\left|\gp(\sample, \tau)\right| = O\left(|\sample|n^23^{2n}\right)$. Again, \citet{Balcan17:Learning} provide guidance for finding these intervals.
\end{document}